\newcommand*{\rom}[1]{\expandafter\@slowromancap\romannumeral #1@}
\let\oldnl\nl
\newcommand{\nonl}{\renewcommand{\nl}{\let\nl\oldnl}}%
\newcommand{\nop}[1]{}
\algnewcommand{\LeftComment}[1]{\Statex \hspace{-\algorithmicindent} \(\triangleright\) #1}
\definecolor{softpurple}{RGB}{150, 120, 180}
\definecolor{myviolet}{RGB}{138, 43, 226}  %
\begin{document}
\title{\huge Group-Sensitive Offline Contextual Bandits}
\author{
   Yihong Guo\thanks{ 
   Johns Hopkins University; email: {\tt
   yguo80@jhu.edu
   }} 
   ~~~~~~
   Junjie Luo\thanks{ 
   Johns Hopkins University; email: {\tt
   jluo41@jhmi.edu
   }} \\
   Guodong (Gordon) Gao\thanks{ 
   Johns Hopkins University; email: {\tt  
    gordon.gao@jhu.edu
   }} 
   ~~~~~~
   Ritu Agarwal\thanks{ 
   Johns Hopkins University; email: {\tt  
   ritu.agarwal@jhu.edu
   }} 
   ~~~~~~
   Anqi Liu\thanks{
   Johns Hopkins University; email: {\tt
   aliu.cs@jhu.edu}
   }   
}
\date{}
\maketitle

\begin{abstract}
Offline contextual bandits allow one to learn policies from historical/offline data without requiring online interaction.
However, offline policy optimization that maximizes overall expected rewards can unintentionally amplify the reward disparities across groups. 
As a result, some groups might benefit more than others from the learned policy, raising concerns about fairness, especially when the resources are limited. 
In this paper, we study a group-sensitive fairness constraint in offline contextual bandits, reducing group-wise reward disparities that may arise during policy learning. 
We tackle the following common-parity requirements: the reward disparity is constrained within some user-defined threshold or the reward disparity should be minimized during policy optimization. 
We propose a constrained offline policy optimization framework by introducing group-wise reward disparity constraints into an off-policy gradient-based optimization procedure. 
To improve the estimation of the group-wise reward disparity during training, we employ a doubly robust estimator and further provide a convergence guarantee for policy optimization.
Empirical results in synthetic and real-world datasets demonstrate that our method effectively reduces reward disparities while maintaining competitive overall performance.
\end{abstract}

\section{Introduction}
Contextual bandits \citep{bouneffouf2020survey} are a class of sequential decision-making problems in which, at each round, an agent observes a context and pulls an arm from the action set, then receives the corresponding reward. 
The goal is to maximize the cumulative reward. 
Unlike full-feedback settings, the agent does not observe rewards for unseen actions. 
In many real-world settings (e.g., news recommendation, medical treatment), collecting new interaction data to learn a policy is often costly, risky, or infeasible.
Instead, one often only has access to historical interactions and learn a new policy from the offline logging dataset. 
This offline contextual bandit setting is essential when safety constraints or budget limitations prohibit further experimentation, yet one still wishes to leverage logging data for policy evaluation and improvement.

In many applications, reward represents the benefit an individual receives from the environment (e.g., access to resources, service quality, health). Offline policy optimization seeks to maximize expected reward but can also create or amplify group reward disparities in gains/outcomes. Consider sending prescription pickup reminders task formulated as an offline contextual bandit: context = patient features, action = when to send, reward = whether pickup and the offline data is the historical data, consisting of (patient, message sending time, whether pick up). Picking up the prescription will potentially benefit the patient's health.
The offline dataset shows that female patients tend to have higher pickup rates than male patients due to unobserved behavioral or contextual differences, as shown in the left two columns of \Cref{fig:smsdata_intro}. When we optimize a policy to maximize the expected pickup rate, we may learn to reinforce message styles or timing that align more with what works for female patients to easily gain rewards, leading to a higher reward disparity as illustrated in \Cref{fig:smsdata_intro}. Such improvement leads to a more unfair policy. 
In high-stakes domains (healthcare, education, public resources) with limited resources, such disparities are especially problematic: unfair policies that boost one group’s outcomes can disproportionately harm others. This raises urgent concerns about fairness, accountability, and safe, equitable decision-making in offline learning.

\begin{wrapfigure}{r}{0.4\textwidth}
    \centering
\includegraphics[width=\linewidth]{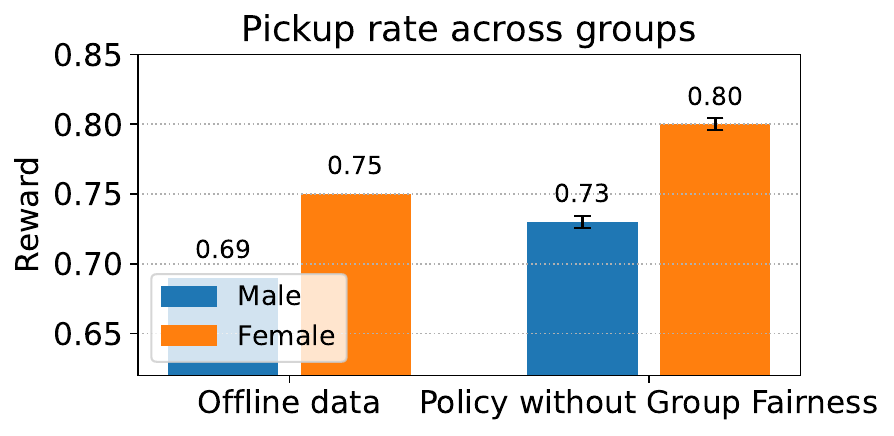}
    \caption{The reward of two groups from a real pick-up reminder message sending task where the goal is to improve the pickup rate of prescriptions. Policy optimization without group-sensitive fairness increases the reward disparity.
    }
    \label{fig:smsdata_intro}
\end{wrapfigure}

To summarize, in the offline contextual bandit setting, we identify an illustrative scenario of group-level reward disparity through real-world data and experiments. Specifically, optimizing a policy solely for reward can introduce or amplify reward disparities, resulting in uneven benefit distribution and raising concerns about fairness. Prior work on fairness in bandit settings has primarily addressed fairness in action exposure \citep{patil2021achieving, wang2021fairness} or fairness, such as individual fairness \citep{huang2021fairness,huang2022achieving}, rather than accounting for disparities in the rewards received by different demographic groups.
RobinHood \citep{metevier2019offline} proposes a meta-algorithm for the offline contextual bandit with fairness guarantees on a broad range of fairness metrics, that penalize the reward with the upper bound of the fairness statistical test. However, the optimization process is a black-box optimization algorithm, making it hard to apply to large-scale and high-dimensional datasets. 
Despite having a high probability guarantee, the algorithm may lead to overly conservative results and a severe trade-off between fairness and reward.

In this paper, we propose to mitigate the reward disparity issue through constrained offline policy optimization so that 
the reward disparity is smaller than a user-defined threshold. Specifically, we propose the off-policy group-constrained policy gradient method (GC-PG) and account for the constraint with the Lagrangian method. We employ a doubly robust estimator (DR) to obtain an estimator that exhibits low bias and variance for policy gradient and constraint estimation. Our contributions are as follows:
\begin{itemize}[itemsep=1pt,topsep=0pt,parsep=0pt,leftmargin=*]
    \item We tackle the group-sensitive reward disparity amplification in offline contextual bandit settings. We develop an off-policy group-constrained policy gradient algorithm (GC-PG), allowing for flexible control over group disparities through user-defined thresholds.
    \item We provide a theoretical guarantee of convergence, showing that our algorithm converges to a stationary point at a rate of $O(1/T)$.
    \item We validate our approach on synthetic offline contextual bandit problems converted from real-world classification datasets and a real-world prescription reminder message timing problem, demonstrating that our method effectively mitigates reward disparities without compromising overall policy performance. We also discuss the potential trade-off between fairness and performance, and an alternative solution to better balance the reward and fairness. 
\end{itemize}

\section{Related Work}

\textbf{Fairness in machine learning}
Fairness in machine learning has been extensively studied across supervised learning settings \citep{pessach2022review}. Classical notions of fairness include statistical parity, equalized odds, and individual fairness, with algorithms proposed to mitigate group disparities via pre-processing \citep{pessach2022review,
calmon2017optimized}, in-processing \citep{calders2010three,samadi2018price}, or post-processing techniques \citep{corbett2017algorithmic,
menon2018cost}. Recent advances have extended fairness considerations to sequential decision-making, such as bandits. One line of fairness bandit works considers the ``fair exposure'' of the action to avoid unfair winner-takes-all allocation of exposure \citep{patil2021achieving, wang2021fairness}. 
These works did not consider sensitive features, but focused more on the fairness of the action selection process. \cite{schumann2019group} consider a setting where the arms are partitioned into different sensitive groups, and the chance of pulling those arms should be equal. \cite{huang2021fairness, huang2022achieving} focus on individual fairness and propose that, in online recommendations, similar people should receive similar rewards despite the sensitive features. Similar to our problem, RobinHood \citep{metevier2019offline} proposed a meta-algorithm for fairness bandit with user-defined fairness metrics. However, the black box optimization method and the high probability guarantee to satisfy make it hard to scale to large and high-dimensional datasets, and might lead to severe performance and fairness trade-offs. In contrast, we propose a more practical policy gradient method that is more scalable to large datasets, through which we also achieve more favorable tradeoffs.

\textbf{Offline contextual bandit}
Many methods have been proposed to optimize off-policy in offline contextual bandits. One can first estimate the missing bandit feedback with a direct method \citep{zadrozny2003cost,
beygelzimer2009offset,guo2024distributionally}, propensity scores \citep{bottou2013counterfactual,
swaminathan2015counterfactual}, or doubly robust estimator \citep{dudik2011doubly,
zhao2015doubly} from offline data. Recently, many other robust/pessimistic methods have been proposed to conservatively estimate the missing feedback \citep{li2022pessimism,
yang2023distributionally}. With such counterfactual reward estimation, the policy gradient \citep{sutton1999policy} is a popular method for optimizing the parameterized policy in offline bandit and RL, and typically requires on-policy sampling with the policy. In our paper, we utilize the doubly robust estimator for counterfactual reward estimation and the evaluation of the fairness constraint.

\section{Method}
\subsection{Problem Formulation}

In the contextual bandit, at each round $t$, the agent observes a context vector $x_t \in \mathcal{X}$, selects an action $a_t \in \mathcal{A}$, and receives a corresponding reward $r(x_t, a_t)$. 
A policy $\pi(a|x)$ maps a context $x$ to a distribution over the action space. 
The objective of policy optimization is to find a policy $\pi$ that maximizes the expected reward, also known as the \emph{value function}, defined as
    $V(\pi) = \mathbb{E}_{x \sim \mathcal{P}_x,\, a \sim \pi(\cdot \mid x)} \left[ r(x, a) \right]$,
where $\mathcal{P}_x$ denotes the (unknown) distribution over contexts.

In the offline contextual bandit, the agent has access to a fixed offline dataset $\mathcal{D} = \{(x_i, a_i, r_i)\}_{i=1}^n$, collected by a logging policy $\pi_\beta$. 
The \emph{offline contextual bandit} problem learns a new policy $\pi$ from $\mathcal{D}$ that achieves high value $V(\pi)$ under the true environment dynamics.

\paragraph{Group-Sensitive Constraint} 
We consider fairness across a sensitive attribute $S \in \{0,1\}$ (e.g., gender, race, or age group) that is part of the context. Note that we consider the two-group case. We denote the two groups as
    $X^0 = \{x \in X \mid S = 0\}, \quad X^1 = \{x \in X \mid S = 1\}$, 
and the $P_{X^0}$ and $P_{X^1}$ are the distributions of the group  $X^0$ and $X^1$.
We define the \emph{reward disparity} $F(\pi)$ between the two groups under a policy $\pi$ as
\[
    F(\pi) = | \mathbb{E}_{x \sim \mathcal{P}_{X^0},\, a \sim \pi(\cdot \mid x)} [ r(x, a) ] - \mathbb{E}_{x \sim \mathcal{P}_{X^1},\, a \sim \pi(\cdot \mid x)} [ r(x, a) ] |.
\]

A \emph{group-sensitive constraint} requires that the reward disparity is bounded:
    $F(\pi) \leq \epsilon$,
where $\epsilon > 0$ is a tolerance parameter that can be flexibly chosen based on application-specific requirements or based on the observed reward disparity under the logging policy $\pi_\beta$, i.e., $ \epsilon = F(\pi_\beta)$. Thus, the objective of offline contextual bandit problem under group-sensitive fairness constraints can be formalized as:
\[
    \max_{\pi} \quad  V(\pi), \quad
    \text{subject to} \quad F(\pi) \leq \epsilon.
\]

\subsection{Off-Policy Group-Constrained Policy Gradient}

In this section, we propose the Off-Policy Group-Constrained Policy Gradient algorithm (GC-PG) to learn a policy from offline data that maximizes the expected reward while improving fairness across different demographic groups. Specifically, we propose a policy gradient approach to optimize the policy and employ a Lagrangian method to enforce constraint satisfaction, ensuring that the reward disparity across groups remains bounded or is reduced. The learning process involves updating both the policy parameters $\theta$ and the dual variables $(\lambda, \eta)$ iteratively. We first present how we perform policy gradient updates; then, we introduce the Lagrangian relaxation for the constraint. At last, we propose to use the doubly robust estimator to mitigate the bias and variance when evaluating group-sensitive fairness satisfaction for the Lagrangian method as well as the policy gradient.

\textbf{Policy Gradient.} We parameterize the policy $\pi$ with parameters $\theta$, denoted as $\pi_\theta(a \mid x)$, such that $\int_{a} \pi_\theta(a|x) da = 1$ for all $ x \in \mathcal{X}$. In offline contextual bandit, the policy seeks to maximize the expected reward:
   $ \max_{\pi_\theta} \mathbb{E}_{x \sim \mathcal{P}_x,\, a \sim \pi(\cdot \mid x)} \left[ r(x, a) \right]$. 
 Then, the gradient of the value function is given by:
\begin{align}
   \label{eq: vanilla policy gradient}
    \nabla_{\theta}V(\pi_{\theta}) = \mathbb{E}_{x \sim \mathcal{P}_x,\, a \sim \pi(\cdot \mid x)} \left[ \nabla \log \pi_{\theta} (a|x)r(x, a) \right]. 
\end{align}
In the offline contextual bandit setting, given a context $x$, we might not have access to the $r(x, \pi_{\theta}(x))$ but only have the reward of a context action pair from an unknown logging policy. Thus, we estimate the reward $\hat{r}(x, a)$ from the static offline dataset. And we can obtain the policy gradient through simulation of the context action pair $\mathcal{D}_{\pi_\theta} = \{(x_i, \pi_\theta(a|x)) \}$, and the approximation policy gradient is:
\[
    \textstyle
    \hat{g}(\theta) =  \frac{1}{|\mathcal{D}_{\pi_\theta}|} \sum_{i \in \mathcal{D}_{\pi_\theta}}\nabla \log \pi_{\theta} (a_i|x_i) \hat{r}(x_i, a_i),
\]
where $\hat{r}$ is the reward estimator. In the paper, we use the XGBoost model to estimate the reward. 

\textbf{Lagrangian Relaxation.}
To handle the group-sensitive constraint, we introduce a Lagrangian formulation that allows soft constraint satisfaction. Note that here, we only consider two groups; however, our method can be extended to multiple group settings, which will be discussed later. Specifically, the Lagrangian objective is given by:
\begin{align*}
\textstyle
\min_{\lambda \geq 0} \max_\theta V(\pi_\theta) + \lambda & \big[\epsilon - | \mathbb{E}_{x \sim \mathcal{P}_{\mathcal{X}^0},\, a \sim \pi(\cdot \mid x)} r(x, a)  - \mathbb{E}_{x \sim \mathcal{P}_{\mathcal{X}^1},\, a \sim \pi(\cdot \mid x)}  r(x, a) | \big]
\end{align*}
Removing the absolute value and rewriting this:
\begin{align}
\label{eq:rewrite lagrangian objective}
& \min_{\lambda \geq 0, \eta > 0}  \max_\theta V(\pi_\theta) \\ &+ \lambda \bigg[\epsilon - \big[\mathbb{E}_{x \sim \mathcal{P}_{\mathcal{X}^0},\, a \sim \pi_\theta(\cdot \mid x)} r(x, a) - \mathbb{E}_{x \sim \mathcal{P}_{\mathcal{X}^1},\, a \sim \pi_\theta(\cdot \mid x)}  r(x, a)  \big]\bigg]\notag\\
& + \eta \bigg[\epsilon - \big[ \mathbb{E}_{x \sim \mathcal{P}_{\mathcal{X}^1},\, a \sim \pi_\theta(\cdot \mid x)}  r(x, a) - \mathbb{E}_{x \sim \mathcal{P}_{\mathcal{X}^0},\, a \sim \pi_\theta(\cdot \mid x)}  r(x, a)  \big] \bigg], \notag
\end{align}
where $\lambda, \eta \geq 0$ are dual variables that adaptively penalize group-sensitive constraint violations in either direction. Optimizing this Lagrangian enables us to maximize the reward while minimizing fairness violations through the dynamic adjustment of $\lambda$ and $\eta$.

\textbf{Policy Learning.}
At each iteration, we update the policy parameters $\theta$ via gradient ascent to maximize the Lagrangian $L$.
The policy gradient is derived as:
\begin{align*}
    \nabla_\theta L 
    &= \mathbb{E}_{x,a \sim \pi_\theta} \nabla_\theta \log \pi_\theta(a \mid x) r(x,a)  \\
    & \quad - (\lambda - \eta) \, \mathbb{E}_{x,a \sim \pi_\theta, X^1} [\nabla_\theta \log \pi_\theta(a \mid x) r(x,a) ] \\
    & \quad + (\lambda - \eta) \, \mathbb{E}_{x,a \sim \pi_\theta, X^0} [\nabla_\theta \log \pi_\theta(a \mid x) r(x,a) ]\\
    & = \mathbb{E}_{x,a \sim \pi_\theta, X} \nabla_\theta \log \pi_\theta(a \mid x) r(x,a)  \big[ I(x \in X^1) (1 - \lambda + \eta) +I(x \in X^0) (1 + \lambda - \eta) \big].
\end{align*}
This formulation suggests that the algorithm will ``rescale'' the gradient of different sensitive groups. Or from another perspective, the algorithm ``up-weights'' the reward of the lower reward group and ``down-weights'' the higher reward group. 

In practice, these expectations are estimated using mini-batches of the simulated offline data $\mathcal{S}_{\pi_{\theta}} = \{(x_i, \pi_{\theta}(x_i), \hat{r}(x_i, \pi_{\theta}(x_i)))\}$. Then, the empirical policy gradient is approximated as:
\begin{align}
    \label{eq:policy_gradient_with_lambda}
    \hat{g}(\hat{\theta})
    &=  \frac{1}{n} \sum_{i=1}^n \nabla_\theta \log \pi_\theta(a_i \mid x_i) \hat{r}(x_i, a_i)[I(x_i \in X^1)(1 - \lambda + \eta) +I(x_i \in X^0) (1 + \lambda - \eta)].  
\end{align}

The $\theta$ are then updated via gradient ascent:
    $\theta = \theta + \alpha \hat{g}(\theta)$,
where $\alpha > 0$ is the policy learning rate.

\textbf{Dual Variable Update.}
At each optimization step, we update the dual variables $\lambda$ and $\eta$ via gradient descent to enforce the group-sensitive constraint. By taking the derivative of the $\lambda$ and $\eta$, we have the gradient for the dual variables: 
\begin{align*}
    \nabla_\lambda L &= \epsilon - \big[ \mathbb{E}_{x,a \sim \pi_\theta, X^1} r(x,a) - \mathbb{E}_{x,a \sim \pi_\theta, X^0} r(x,a) \big], \\
    \nabla_\eta L &= \epsilon - \big[ \mathbb{E}_{x,a \sim \pi_\theta, X^0} r(x,a) - \mathbb{E}_{x,a \sim \pi_\theta, X^1} r(x,a) \big].
\end{align*}
In this dual variable update, we need to perform the off-policy evaluation to approximate the reward in two groups: $\hat{V}_{X^k}(\pi_\theta)$, $k \in \{0,1\}$:
\begin{align}
\textstyle
    \label{eq: ope group reward}
    &\hat{V}_{X^k}(\pi_\theta) = \frac{1}{\sum_i^n I(x_i \in X^k)}  \sum_{i}^{n} \hat{r}(x_i,\pi_\theta(x_i)) I(x_i \in X^k)
\end{align}

Then the approximated gradient of $\lambda$ and $\eta$ are:
\begin{align*}
    & \hat{g}(\lambda) = \epsilon - \big[ \hat{V}_{X^1}(\pi_\theta) - \hat{V}_{X^0}(\pi_\theta)\big],  \\
    & \hat{g}(\eta)  = \epsilon - \big[ \hat{V}_{X^0}(\pi_\theta) - \hat{V}_{X^1}(\pi_\theta)\big].
\end{align*}

We then perform the updates:
    $\lambda \leftarrow \lambda - \beta \hat{g}(\lambda),\eta \leftarrow \eta - \beta \hat{g}(\eta)$,
and $\beta > 0$ is the dual learning rate.

\textbf{Doubly robust estimator.} However, using vanilla reward estimation for policy optimization and off-policy evaluation is highly biased, especially when the logging policy is very different from the learning policy $\pi_{\theta}$ \citep{dudik2011doubly}. Such high bias will lead to a biased estimation of the policy gradient in \Cref{eq: vanilla policy gradient} and also the reward disparity in \Cref{eq: ope group reward}. which directly affects constraint estimation. Therefore, we propose using a doubly robust (DR) estimator to minimize bias while maintaining low variance. The DR estimator combines an importance-weighted value estimate with a model-based estimate of the reward, offering robustness to misspecification in either component. Specifically, the DR estimators are given by:
\begin{align}
    \hat{V}^{\text{DR}}(\pi_\theta) &= \frac{1}{n} \sum_{i=1}^n \bigg[ \hat{r}(x_i, \pi_{\theta}(x_i))  + \frac{\pi_{\theta}(a_i \mid x_i)}{\pi_b(a_i \mid x_i)} \big[ r_i - \hat{r}(x_i, a_i)\big ] \bigg], \notag \\
    \label{eq: dr x0}
    \hat{V}^{\text{DR}}_{X^k}(\pi_\theta) &= \frac{1}{\sum_i^n I(x_i \in X^k)} \sum_{i=1}^n \bigg[ \hat{r}(x_i, \pi_{\theta}(x_i)) + \frac{\pi_{\theta}(a_i \mid x_i)}{\pi_b(a_i \mid x_i)} \big[ r_i - \hat{r}(x_i, a_i) \big] \bigg ]I(x_i \in X^0), k \in \{0,1\},
\end{align}
where $\pi_b$ is the logging policy, $\pi_{\theta}$ is the target policy, $\hat{r}$ is an estimate of the action-value function, and $(x_i, a_i, r_i)$ are samples collected under $\pi_b$. This estimator corrects for bias in model-based predictions using importance-weighted observed rewards, and provide a more accurate and stable estimate of the policy value and gradient. For convenient, we use the notation $\hat{r}_{\text{dr}}(x_i,\pi_\theta(x_i))$ to replace the DR estimator.
Then the gradient of the policy and the dual variable are:
\begin{align}
    \label{eq: policy optimization}
    &\hat{g}(\hat{\theta}) = \frac{1}{n} \sum_{i=1}^n \nabla_\theta \log \pi_\theta(a_i \mid x_i) \hat{r}_\textbf{dr}(x_i, a_i)  \big[I(x_i \in X^1)(1 - \lambda + \eta) +I(x_i \in X^0) (1 + \lambda - \eta) \big],\\
    &\hat{g}(\lambda)  = \epsilon - \big[  \hat{V}^{\text{DR}}_{X^1}(\pi_\theta) - \hat{V}^{\text{DR}}_{X^0}(\pi_\theta)\big], \nonumber\\
    &\hat{g}(\eta) = \epsilon - \big[\hat{V}^{\text{DR}}_{X^0}(\pi_\theta) - \hat{V}^{\text{DR}}_{X^1}(\pi_\theta)  \big].\nonumber
\end{align}

\paragraph{Overall Procedure.}
Our method alternates between updating the policy parameters to maximize reward and adjusting the dual variables to enforce fairness. 
When the reward disparity between $X^0$ and $X^1$ exceeds $\epsilon$, the corresponding dual variable increases to impose a stronger penalty, guiding the policy back toward fairness. 
When the fairness constraint is satisfied, the dual variables decrease, allowing the policy to prioritize reward optimization. We summarize the it in \Cref{algo: main}

\textbf{Extension to Multiple Groups} Naively extending the method to multiple groups induces pair-wise constraints in \Cref{eq:rewrite lagrangian objective}, which is $O(G^2)$ constraints per update. Such formulation is not practical when there are more than three groups. Instead of considering pair-wise constraints every step, we can optimize a max-pair surrogate: at each step, we identify the most violated group pair $(i,j)$, i.e., $\argmax_{i,j} | R_i - R_j| $, and do the constraint optimization only on that group pair. We refer to the detailed formulation, discussion, and the experimental results in the \Cref{sec: exp}. 

\begin{algorithm}[t]
\caption{Group-sensitive offline contextual bandit}
\label{algo: main}
\begin{algorithmic}[1]
    \State \textbf{Input:} Offline dataset $S = \{x, a, r, \pi_\beta(a \mid x)\}$, threshold $\epsilon$, sensitive feature
    \State \textbf{Output:} Learned policy $\pi_\theta$
    \State \textbf{Init:} Policy $\pi_\theta$, reward model $\hat{r}$, policy learning rate $\alpha$, dual update rate $\beta$
    \State Learn reward estimator $\hat{r}$ from $S$
    \For{$t = 0, \dots, T$}
        \State Sample context–action pairs $D_{\pi_\theta}$
        \State Update $\theta_{t+1} = \theta_t + \alpha \hat{g}(\theta_t)$ using \Cref{eq:policy_gradient_with_lambda}
        \State Compute $\hat{V}_{X^0}^{\mathrm{DR}}(\pi_{\theta_{t+1}})$ and $\hat{V}_{X^1}^{\mathrm{DR}}(\pi_{\theta_{t+1}})$ via \Cref{eq: dr x0} and update the dual variable
    \EndFor
    \State \textbf{return} $\pi_{\theta_T}$
\end{algorithmic}
\end{algorithm}

\section{Theoretical Analysis}
In this section,  we provide the convergence analysis of our algorithm. We defer the proof to \Cref{section: proof}.
\begin{assumption}
    Let the $\pi_\theta(a|x)$ be a policy. There exists a $G$ such that for any context $x$, the log-density of the policy function is bounded by $G$, i.e. $\| \nabla_\theta \log \pi_\theta(a|x) \|_2 \leq G$. 
\end{assumption}
This assumption is a general assumption that the policy changes smoothly, which is widely seen in previous contextual bandit literature \citep{allen2016variance,xu2020improved,yang2023distributionally}.
\begin{assumption}
    \label{assumption:dual bounded}
    Let $\lambda$ and $\eta$ be the dual variable during training, there exists an $B$ such that the  $\lambda$ and $\eta$ are upper bounded by $B$. 
\end{assumption}
This assumption can be easily satisfied by projecting the $\lambda$ and $\eta$ to the $(0, B]$ interval at each update. In the policy optimization objective \Cref{eq: policy optimization},  the reward is scaled by either $1-\lambda + \eta$ or $1+\lambda - \eta$, which can become negative. If this scaling factor turns negative, it means we are explicitly harming one group’s performance in order to achieve fairness, which may be undesirable in real-world applications.

\begin{lemma} (Bias of the doubly robust estimator). The bias of the doubly robust estimator under the Rademacher complexity is bounded with $1-2\delta$ probability: 
\label{lemma: bias}
    \begin{align*}
        \text{Bias}^2(\hat{V}^\text{Dr}(\pi_\theta)) = E_{x}[(\hat{r}(x,a) - r(x,a))(1-\frac{\pi_b(a|x)}{\hat{\pi}_b(a|x)})]^2 \leq O(\frac{\log(1/\delta)}{|D|})
    \end{align*}
where $\pi_b$ is the logging policy and $\hat{\pi}_b(a|x)$ is the estimated logging policy. 
\end{lemma}
The bias of the DR estimator reduces to 0 with the convergence rate $O(1/|D|)$, relying on the estimation of the reward and the logging policy. 

We now present our main Theorems. 
\begin{theorem}
\label{theorem: reward mse}
The MSE between the DR estimator and the true reward is bounded with probability $1-2\delta$ as follows:
\begin{align*}
        \text{MSE}_\text{DR} = & \mathbb{E}_{(x_i, a_i, r_i) \sim D} \bigg[ \hat{r}(x_i, \pi_{\theta}(x_i))  + \frac{\pi_{\theta}(a_i \mid x_i)}{\pi_b(a_i \mid x_i)} ( r_i - \hat{r}(x_i, a_i)) - r(x_i, \pi_\theta( x_i)) \bigg]^2 \nonumber  \\ 
        = & \text{Var}[\hat{V}^{\text{DR}}(\pi_\theta)] + \text{Bias}^2(\hat{V}^{\text{DR}}(\pi_\theta)) \\
        \leq & O(1/|D|) + O(\frac{\log(1/\delta)}{|D|})
    \end{align*}
\end{theorem}

\textbf{Remark.} \Cref{theorem: reward mse} shows that the mean squared error (MSE) of the DR estimator can be decomposed into the bias and variance terms. The first term is the variance of the doubly robust estimator, with a convergence rate of $O(1/|D|)$ as shown in \Cref{lemma: variance} using Theorem 2 in \cite{dudik2011doubly}. The second term is the $O(1/|D|)$ convergence rate of the square of the bias term. Intuitively, the MSE decreases to 0 with more training data. Therefore, when the reward model has low bias and low variance, the resulting MSE of the doubly robust estimator will be significantly improved.

\begin{theorem}
    \label{theorem: approximation error}
    (Estimation error of the gradient) 
    With probability at least $1-2\delta$, the generalization
bound of policy gradient estimation satisfies:  
\begin{align*}
\mathbb{E}_{\pi_{\theta_t}}\left[\left\| \hat{g}_{\theta_t} - \nabla V^\pi_{\theta_t} \right\|_2^2 \right] \leq G^2 (1 + 2B)^2 \text{MSE}_\text{DR} .
\end{align*}

\end{theorem}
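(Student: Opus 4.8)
The plan is to bound the squared error of the empirical policy gradient $\hat g(\theta_t)$ around the true gradient $\nabla V^{\pi_{\theta_t}}$ by isolating the two sources of error: the reward-estimation error (through the DR estimator) and the group-reweighting coefficients, then invoking Theorem~\ref{theorem: reward mse}. First I would write the per-sample gradient term as $\nabla_\theta \log \pi_{\theta_t}(a_i\mid x_i)\,\hat r_{\mathrm{dr}}(x_i,a_i)\,w(x_i)$, where $w(x_i) = I(x_i\in X^1)(1-\lambda+\eta) + I(x_i\in X^0)(1+\lambda-\eta)$, and note that by Assumption~\ref{assumption:dual bounded} we have $|w(x_i)| \le 1 + 2B$ (since $\lambda,\eta\in(0,B]$, the worst case is $1+\lambda$ or $1+\eta$ bounded by $1+B$, and more crudely $1+2B$). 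The true gradient $\nabla V^{\pi_{\theta_t}}$ is the expectation of the same expression with $\hat r_{\mathrm{dr}}$ replaced by the true reward $r(x_i,\pi_{\theta_t}(x_i))$ and $w$ kept (the reweighting is part of $\nabla_\theta L$, so ``true gradient'' here should be read as the gradient of the Lagrangian objective at fixed duals; I would state this identification explicitly).

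Next I would apply the triangle inequality / the bound $\|\mathbb{E}[Z]\| \le \mathbb{E}\|Z\|$ together with Cauchy--Schwarz to pull the scalar factors out: for each sample,
\[
\left\| \nabla_\theta \log \pi_{\theta_t}(a_i\mid x_i)\right\|_2 \cdot |w(x_i)| \cdot \big| \hat r_{\mathrm{dr}}(x_i,a_i) - r(x_i,\pi_{\theta_t}(x_i))\big| \;\le\; G\,(1+2B)\,\big| \hat r_{\mathrm{dr}}(x_i,a_i) - r(x_i,\pi_{\theta_t}(x_i))\big|,
\]
using Assumption~1 for the $\log$-gradient norm and the dual bound for $w$. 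Squaring and taking expectations, the cross terms and the averaging over the minibatch of size $n$ are handled by Jensen (the empirical mean of i.i.d.\ terms has squared-norm expectation at most the per-term one, which is why no extra $1/n$ is needed for the stated bound). This reduces the whole quantity to $G^2(1+2B)^2$ times the mean squared error of the DR estimator, i.e.
\[
\mathbb{E}_{(x_i,a_i,r_i)\sim D}\big[ \hat r_{\mathrm{dr}}(x_i,a_i) - r(x_i,\pi_{\theta_t}(x_i))\big]^2,
\]
which is exactly the object bounded in Theorem~\ref{theorem: reward mse} by $\mathrm{Var}[\hat V^{\mathrm{DR}}(\pi_{\theta_t})] + 3R^2$. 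Substituting gives the claimed inequality.

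The main obstacle I anticipate is the bookkeeping around what ``$\nabla V^{\pi_{\theta_t}}$'' means and where the ``with probability $1-\delta$'' enters: the clean algebra above gives an \emph{in-expectation} bound, and to get a high-probability statement one needs a concentration argument (e.g.\ a vector Bernstein or McDiarmid bound on the minibatch average), which would introduce an additive $O(\sqrt{\log(1/\delta)/n})$ term rather than the clean product form stated — so I would either absorb the concentration slack into the constants, assume the reward/gradient boundedness makes the deviation term lower-order, or interpret the bound as holding over the randomness already captured by $\mathrm{Var}[\hat V^{\mathrm{DR}}]$. A secondary subtlety is verifying $|w(x_i)|\le 1+2B$ cleanly given that $\lambda$ and $\eta$ can partially cancel; the safe route is to bound $|1\pm(\lambda-\eta)| \le 1 + |\lambda-\eta| \le 1 + \lambda + \eta \le 1 + 2B$. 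Neither issue is deep, but the statement's mixing of a probabilistic quantifier with a deterministic-looking bound is the part that needs the most careful phrasing.
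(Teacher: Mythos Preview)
Your proposal is correct and follows essentially the same approach as the paper: define the group-reweighting factor (the paper calls it $\delta_t$, you call it $w(x_i)$), bound it by $1+2B$ via Assumption~\ref{assumption:dual bounded}, factor out $\|\nabla_\theta\log\pi_{\theta_t}\|_2\le G$, and then invoke Theorem~\ref{theorem: reward mse} on the remaining DR-reward MSE term. Your concern about the ``with probability $1-\delta$'' quantifier is well-founded---the paper's own proof is purely in expectation and never introduces or uses $\delta$, so that phrase in the statement appears to be vestigial.
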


\textbf{Remark.} The estimation error of the policy gradient consists primarily of the MSE in \Cref{theorem: reward mse}. Since the MSE decreases at a rate of $O(1/|D|)$, the generalization error diminishes as the size of the offline dataset increases, due to a better learned reward estimation model and logging policy estimation.

\begin{theorem}
    \label{theorem: gradient vanish}
    (Gradient vanish). Assuming the value function $V(\pi_\theta)$ is L-smooth for every $\theta$, let $T$ be the policy optimization steps, let the learning rate $\alpha = \frac{1}{4L}$. Then the learned policy $\pi_{\theta_T}$ satisfies:
    \begin{align*}
    \mathbb{E} [ \left\| \nabla V^\pi (\theta_{\hat{T}})\right\|_2^2 ]\leq&  \frac{4}{T\eta} ( V^\pi(\theta_{T+1}) - V^\pi(\theta_1))+ 3 G^2 (1+2B)^2  \text{MSE}_\text{DR} .
    \end{align*}
\end{theorem}

\textbf{Remark.} \Cref{theorem: gradient vanish} shows that the policy converges to a stationary point at a rate of $O(1/T)$, where $T$ is the number of optimization steps. The second term on the right-hand side captures the generalization error in the gradient estimation, as characterized in \Cref{theorem: approximation error}. This term decreases at a rate of $O(1/|D|)$, where $|D|$ is the number of offline data samples, showing the dependence of convergence accuracy on the dataset size.

\section{Experiments}
\label{sec: exp}
In this section, we present the experimental setup and results. We conduct experiments in 3 offline contextual bandit datasets generated from classification problems: Drug Consumption \citep{fehrman2017five}, Adult Income \citep{Dua2019}, and Student Performance Dataset \citep{cortez2008student}, where we validate the effectiveness of our method using different logging policies and sensitive features. We also apply our method to a real-world contextual bandit dataset for optimizing the timing of sending personalized prescription reminder messages. We show the data statistics in \Cref{{tab:data-stats}}.

\begin{table}[H]
\centering
\caption{Dataset Statistics and Sensitive Feature}
\label{tab:data-stats}
\resizebox{0.7\columnwidth}{!}{\begin{tabular}{lcccc}
\toprule
& \textbf{Drug} & \textbf{Adult} & \textbf{Student} & \textbf{Prescription} \\
\midrule
\textbf{Dimension}        & 28    & 50    & 13    & 1,578   \\
\textbf{\# of Arms}       & 4     & 2     & 4     & 12      \\
\textbf{\# of Samples}    & 1,876 & 48,759& 2,392 & 320,120 \\
\textbf{Feature}
& \shortstack{Gender\\Education} &\shortstack{Gender\\Race} & Education & Gender \\
\bottomrule
\end{tabular}}
\end{table}

\textbf{Baseline.}  We compare our method with RobinHood \cite{metevier2019offline}. RobinHood computes the high probability upper bounds $\hat{U}$ of the reward disparity and optimizes with the true reward if the upper bound satisfies the constraint and $-\hat{U}$ otherwise. We follow the original implementation and use CMA-ES \citep{hansen2001completely} to optimize the policy.

\textbf{Datasets.}
We introduce the classification datasets and their sensitive features below and explore sensitive features that may cause the reward disparity.
\begin{itemize}[itemsep=1pt,topsep=0pt,parsep=0pt,leftmargin=*]

    \item \textbf{Drug Consumption}: Predict drug use from demographic/psychometric features. We transform the original 7 classes into 4 bins (past month/ past year/ 5 years/ 10 years) and sensitive attributes are gender (male/female) and education (below/above college).
    
    \item \textbf{Adult Income}: Predict whether annual income exceeds \$$50K$ from demographic attributes. Sensitive attributes are gender (male/female) and race (White/Non-White).
    \item \textbf{Student Performance}: Predict final exam grade from demographic and school-related features (framed as classification). Sensitive attribute: parental education (below/above college).
\end{itemize}

\textbf{Logging policies} We test our algorithm on various offline contextual bandit scenarios by using different logging policy in the offline data $D_\text{offline} = \{x_i,a_i, r_i\}$. %
\begin{itemize}[itemsep=1pt,topsep=0pt,parsep=0pt,leftmargin=*]
\item \textbf{Random Policy}: A baseline policy that selects each action uniformly at random.

\item \textbf{Tweak-1 Policy}: A biased policy where one fixed action is chosen with probability $\rho$ (e.g., $\rho = 0.9$), and the remaining $1 - \rho$ probability is evenly distributed among other actions. In our experiments, we set $\rho = 0.9$, which is relatively a large shift.  

\item \textbf{Mixed Policy}: A learned logging policy obtained from the dataset, which reflects realistic but suboptimal decision-making, typically resulting in a relatively low expected reward.
\end{itemize}

\textbf{Synthetic offline bandit converted from classification.} We first convert a k-class classification task into a k-armed contextual bandit problem. In the classification task, we have $(x, y) \sim D$, where $x \in \mathcal{X}$ is the context vector and $y \in \mathcal{Y}$ is the label. We then generate the offline dataset from $D$ based on the different logging policy $\pi_\beta$, where the $\pi_\beta$ is defined above. Given a feature vector $x$, we choose a label (action) $a$ based on $\pi_\beta(a|x)$ and reveal the reward $r(x, a)$. The reward here is whether the selected action matches the ground truth label. The offline data will be in the form of $(x, a, r(x, a),\pi_\beta(a|x))$. During the training, we assume the logging policy is known. We use 70\% samples for training and 30\% for testing.

\textbf{Optimal timing for personalized prescription reminder messages}. This dataset is a real-world contextual bandit dataset that is collected in a study of the timing of prescription pick-up reminder messages. It contains historical records of patient information, message send times, and click responses. We define the context as the patient features, action as the message sending time, and reward as whether the patients pick up the prescription. The offline dataset contains (context, action, reward) information. In the offline dataset, messages are sent immediately after the medication becomes available, which serves as a proxy for a random logging policy. The goal is to learn a new policy from the offline dataset that selects optimal send times to increase the prescription pick-up rate and we consider \emph{gender} as the sensitive feature. 
\begin{table}[ht]
\centering
\caption{Comparison of original reward v.s. after policy optimization without constraint using a random logging policy. This demonstrates that the reward disparity increases with policy optimization without fairness. More results are referred to in the \Cref{app: additional exp}. }
\label{tab:before_after_diff_arrow}
\resizebox{0.7\columnwidth}{!}{\begin{tabular}{ccccccc}
\toprule
\multirow{2}{*}{\textbf{Data}}&\multirow{2}{*}{\textbf{Feature}} &\multicolumn{2}{c}{\textbf{Offline}} & \multicolumn{2}{c}{\textbf{Optimized}} & \multirow{2}{*}{\shortstack{\textbf{Disparity}\\ \textbf{Increase}}} \\
\cmidrule(lr){3-4} \cmidrule(lr){5-6}
&&\textbf{$X^0$} & \textbf{$X^1$} & \textbf{$X^0$} & \textbf{$X^1$} & \\
\midrule
 Adult & Race & $0.498$ & $0.501$ & $0.902$ & $0.737$ & $0.003 \rightarrow \mathbf{0.165}$ \\
 Adult & Gender & $0.500$ & $0.502$ & $0.768$ & $0.852$ & $0.002 \rightarrow \mathbf{0.084}$ \\
 Drug & Gender & $0.252$ & $0.231$ & $0.515$ & $0.426$ & $0.021 \rightarrow \mathbf{0.081}$ \\
 Drug & Edu & $0.250$ & $0.256$ & $0.425$ & $0.511$ & $0.006 \rightarrow \mathbf{0.086}$ \\
 Student & Edu & $0.239$ & $0.218$ & $0.694$ & $0.556$ & $0.021 \rightarrow \mathbf{0.138}$ \\
\bottomrule
\end{tabular}}
\end{table}

\textbf{Existence of reward disparity.}
In \Cref{tab:before_after_diff_arrow}, we show that the reward disparity increases with policy optimization and the unconstrained policies favor one group over another, showing performance gaps. 
Also, in \Cref{fig:smsdata}, we see that the policy optimization without constraint improves the engagement rate while also increasing the reward disparity. 
These disparities match the motivation for incorporating fairness constraints into offline policy optimization. 

\begin{table*}[t]
\centering
\setlength{\tabcolsep}{6pt}
\caption{Comparison of the \textbf{Unconstrained baseline} denoted as $\pi^\infty$ vs. \textbf{RobinHood} and \textbf{Ours}. Our method receives a lower reward disparity and also a similar overall reward compared with the unconstrained baseline. Our method also outperforms the RobinHood baseline, yielding higher rewards and a similar reward disparity around the $\epsilon$. More experiments on different $\epsilon$ are in \Cref{app: additional exp}.}
\label{tab:main}
\resizebox{\textwidth}{!}{\begin{tabular}{l l l | cc | cc | cc}
\toprule
\multicolumn{3}{c|}{} & \multicolumn{2}{c|}{\textbf{Unconstrained}} & \multicolumn{2}{c|}{\textbf{RobinHood}} & \multicolumn{2}{c}{\textbf{Ours}} \\
Dataset & Group & Off-policy & $\text{Reward}$ & $\text{Disparity}$ & $\text{Reward}$ & $\text{Disparity}$ & $\text{Reward}$ & $\text{Disparity}$ \\
\midrule
\multirow{6}{*}{Drug} & \multirow{3}{*}{\shortstack{Edu\\$\epsilon=0.03$}} 
  & Random  & $0.526_{\pm 0.034}$ & $0.086_{\pm 0.043}$ & $0.137_{\pm 0.009}$ & $0.024_{\pm 0.023}$ & $\mathbf{0.521}_{\pm 0.051}$ & $0.033_{\pm 0.025}$ \\
& & Tweak-1 & $0.501_{\pm 0.021}$ & $0.072_{\pm 0.031}$ & $0.200_{\pm 0.011}$ & $0.068_{\pm 0.082}$ & $\mathbf{0.492}_{\pm 0.029}$ & $0.009_{\pm 0.017}$ \\
& & Mixed   & $0.516_{\pm 0.051}$ & $0.073_{\pm 0.032}$ & $0.145_{\pm 0.017}$ & $0.013_{\pm 0.008}$ & $\mathbf{0.516}_{\pm 0.048}$ & $0.058_{\pm 0.021}$ \\
\cmidrule(lr){2-9}
& \multirow{3}{*}{\shortstack{Gender\\$\epsilon=0.03$}}
  & Random  & $0.470_{\pm 0.023}$ & $0.081_{\pm 0.015}$ & $0.123_{\pm 0.008}$ & $0.018_{\pm 0.003}$ & $\mathbf{0.489}_{\pm 0.035}$ & $0.046_{\pm 0.017}$ \\
& & Tweak-1 & $0.463_{\pm 0.050}$ & $0.094_{\pm 0.039}$ & $0.119_{\pm 0.003}$ & $0.008_{\pm 0.009}$ & $\mathbf{0.491}_{\pm 0.016}$ & $0.009_{\pm 0.005}$ \\
& & Mixed   & $0.476_{\pm 0.039}$ & $0.116_{\pm 0.041}$ & $0.173_{\pm 0.021}$ & $0.018_{\pm 0.024}$ & $\mathbf{0.509}_{\pm 0.042}$ & $0.012_{\pm 0.004}$ \\
\midrule
\multirow{6}{*}{Adult} & \multirow{3}{*}{\shortstack{Gender\\$\epsilon=0.07$}}
  & Random  & $0.792_{\pm 0.014}$ & $0.165_{\pm 0.015}$ & $0.757_{\pm 0.043}$ & $0.083_{\pm 0.011}$ & $\mathbf{0.801}_{\pm 0.038}$ & $0.061_{\pm 0.025}$ \\
& & Tweak-1 & $0.787_{\pm 0.039}$ & $0.172_{\pm 0.018}$ & $0.754_{\pm 0.042}$ & $0.046_{\pm 0.010}$ & $\mathbf{0.776}_{\pm 0.022}$ & $0.064_{\pm 0.019}$ \\
& & Mixed   & $0.797_{\pm 0.067}$ & $0.154_{\pm 0.019}$ & $0.761_{\pm 0.047}$ & $0.095_{\pm 0.011}$ & $\mathbf{0.801}_{\pm 0.087}$ & $0.063_{\pm 0.031}$ \\
\cmidrule(lr){2-9}
& \multirow{3}{*}{\shortstack{Race\\$\epsilon=0.05$}}
  & Random  & $0.805_{\pm 0.053}$ & $0.084_{\pm 0.005}$ & $0.775_{\pm 0.054}$ & $0.035_{\pm 0.004}$ & $\mathbf{0.798}_{\pm 0.052}$ & $0.058_{\pm 0.008}$ \\
& & Tweak-1 & $0.797_{\pm 0.047}$ & $0.092_{\pm 0.014}$ & $0.775_{\pm 0.032}$ & $0.051_{\pm 0.014}$ & $\mathbf{0.802}_{\pm 0.052}$ & $0.054_{\pm 0.017}$ \\
& & Mixed   & $0.805_{\pm 0.051}$ & $0.074_{\pm 0.027}$ & $0.790_{\pm 0.051}$ & $0.068_{\pm 0.019}$ & $\mathbf{0.817}_{\pm 0.054}$ & $0.055_{\pm 0.012}$ \\
\midrule
\multirow{3}{*}{Student} & \multirow{3}{*}{\shortstack{Edu\\$\epsilon=0.07$}}
  & Random  & $0.611_{\pm 0.017}$ & $0.138_{\pm 0.041}$ & $0.507_{\pm 0.038}$ & $0.042_{\pm 0.024}$ & $\mathbf{0.612}_{\pm 0.030}$ & $0.049_{\pm 0.024}$ \\
& & Tweak-1 & $0.672_{\pm 0.049}$ & $0.158_{\pm 0.039}$ & $0.490_{\pm 0.033}$ & $0.009_{\pm 0.005}$ & $\mathbf{0.624}_{\pm 0.029}$ & $0.035_{\pm 0.021}$ \\
& & Mixed   & $0.649_{\pm 0.042}$ & $0.132_{\pm 0.031}$ & $0.500_{\pm 0.041}$ & $0.044_{\pm 0.015}$ & $\mathbf{0.628}_{\pm 0.065}$ & $0.062_{\pm 0.014}$ \\
\bottomrule
\end{tabular}}
\end{table*}

\subsection{Experiment results}
 We evaluate the performance of our algorithm through the reward of different groups of people. We report the average reward and standard deviation among 30 runs. 
The quantitative results for the 3 synthetic cases are presented in \Cref{tab:main}. Each row in the tables corresponds to a specific configuration of sensitive attribute and fairness constraint level $\epsilon$, while the columns show the average rewards, as well as the resulting reward disparities under various logging policies.
For each dataset, we select values of $\epsilon$ based on the magnitude of reward disparities observed in the unconstrained setting, allowing us to meaningfully evaluate how fairness interventions affect policy behavior. Note that our setting is general enough to account for two types of reward parity: 1) reward disparity not widened after policy optimization and 2) resulting reward gap controlled within a tolerance level.  
\begin{figure}
    \centering
    \includegraphics[width=0.7\linewidth]{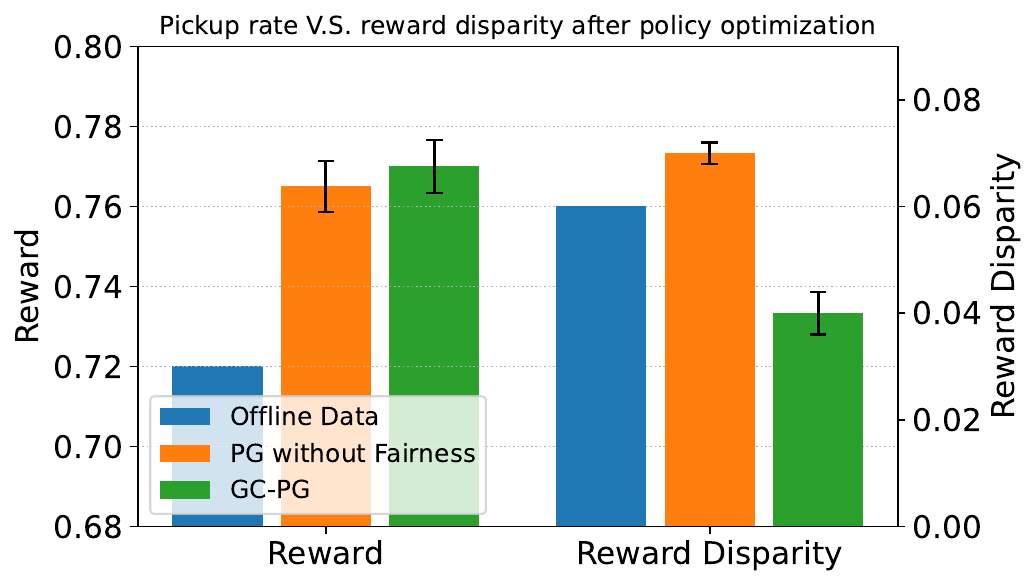}
    \caption{Results on prescription pickup reminder messages. GC-PG improves pickup rates and reduces reward disparity, achieving performance comparable to the unconstrained policy and lower disparity.}  
    \label{fig:smsdata}
\end{figure}

\textbf{Main results} Compared with unconstrained cases, our method can significantly reduce reward disparities given different fairness tolerance $\epsilon$ while maintaining similar rewards. 
Comparing our method with the RobinHood baseline, we can observe reward improvement in both groups, while maintaining a lower or similar reward disparity across all datasets. Especially with $\epsilon = 0.03$ in the Drug dataset, RobinHood tends to be overly conservative towards the fairness constraint when trying to achieve the high probability guarantee of fairness. Running the safety check with the upper bound of reward disparity at every iteration step may lead to very poor performance when $\epsilon$ is small.

\textbf{A closer look at prescription message timing} We present the results in \Cref{fig:smsdata}. In the left two columns, we observe that in the offline dataset, female patients exhibit a higher engagement rate ($0.73$) compared to male patients ($0.69$). The middle two columns show that optimizing the policy without fairness constraints leads to increased rewards for both groups $(0.80, 0.73)$ but also amplifies the reward disparity. In contrast, the right two columns demonstrate that our proposed algorithm, GC-PG, not only improves the rewards for both groups $(0.79,0.75)$ but also significantly reduces the disparity while maintaining overall performance comparable to the unconstrained case.

\begin{figure}
  \centering
  \resizebox{0.7\columnwidth}{!}{
  \begin{tabular}{cc} %
    \includegraphics[width=0.5\linewidth]{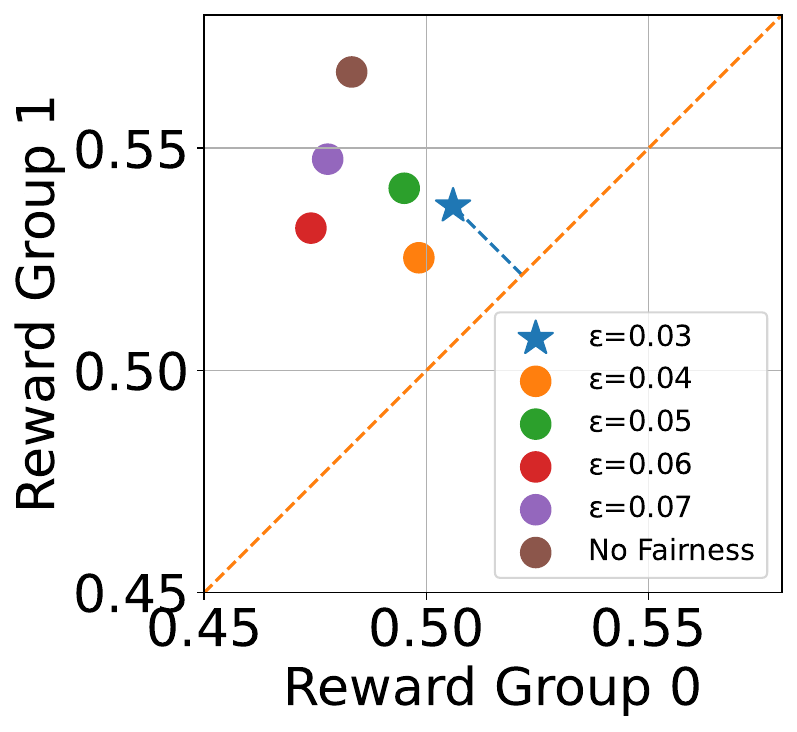} &
    \includegraphics[width=.5\linewidth]{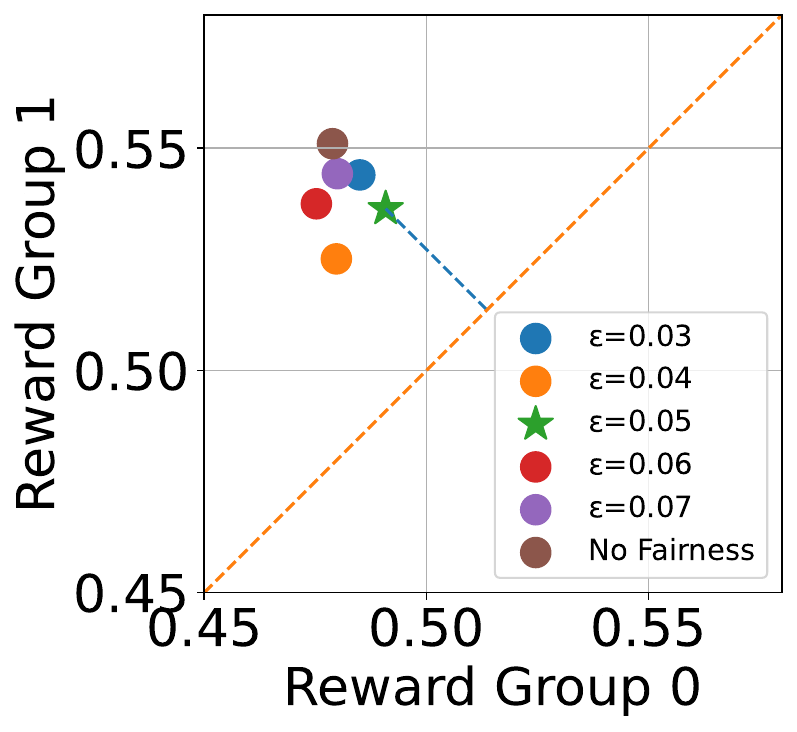} \\
    Random Policy &  Mixed Policy
  \end{tabular}}
  \caption{Scatter plot of reward $(R_1, R_2)$ on two groups using three different logging policies with different $\epsilon$ on drug dataset. Our method can reduce the reward disparity, but also has a trade-off between fairness and per-group reward. We mark the best policy, which is not Pareto dominated by others and is the most fair, with ``star". We defer more experimental results to \Cref{app: additional exp}.}
  \label{fig:trade-off}
\end{figure}

\textbf{Trade-off between fairness, performance, and Pareto optimality}
\Cref{fig:trade-off} visualizes the group-wise rewards as points \((R_1,R_2)\).
Points closer to the diagonal \(R_1=R_2\) indicate smaller reward disparity. We see that the unconstrained policy, $\pi^{\infty}$, exhibits the largest disparities (furthest from the diagonal).
In contrast, policy optimization with fairness using different \(\epsilon\) generally reduces reward disparity.
Across logging policies, we also observe that for at least one choice of \(\epsilon\), the unconstrained baseline does \emph{not} yield a Pareto improvement over the corresponding constrained policy from both directions, where the Pareto improvement represents the reward improvement on both groups. 
Moreover, the overall reward is often comparable to, or exceeds, the unconstrained baseline, suggesting a favorable trade-off when \(\epsilon\) is chosen appropriately.

If the user seeks to balance the trade-off between the fairness and per-group performance instead of only aiming at the target reward disparity $\epsilon$, we can slightly modify our method to achieve that.  From a high-level perspective, we can obtain a policy set with different $\epsilon$ and we select the policy that has the highest overall reward or most Pareto improvement and is most fair, i.e. with the smallest reward disparity. Practically, 
A simple, effective procedure is:
(i) sweep a small grid of \(\epsilon\) values;
(ii) compute \(\bigl(R_1^{(\epsilon)}, R_2^{(\epsilon)}\bigr)\) and \( |R_1^{(\epsilon)} - R_2^{(\epsilon)}|\);
(iii) form the Pareto frontier (non-dominated set) in the \((R_1,R_2)\)-plane; and
(iv) pick the point on this frontier that has the smallest reward disparity. 

In \Cref{fig:trade-off}, we see that the Pareto frontier when the logging policy is random in \Cref{fig:trade-off} (a) is $\pi^{0.03}$, $\pi^{0.05}$, and $\pi^{\infty}$, where $\pi^{\infty}$ is the unconstrained baseline. Thus, we can select $\pi^{0.03}$ as the policy when the logging policy is the random policy. Similarly, when the logging policy is Mixed policy, $\pi^{0.05}$, $\pi^{0.03}$ and $\pi^{\infty}$ are on the Pareto frontier and $\pi^{0.05}$ is the most fair one. We defer more results to the \Cref{app: additional exp}.

\subsection{Extension to multiple groups}
We can extend our method to multiple groups with a simple modification by only considering the most violated group pair at every Lagrangian optimization step instead of all pairs.  We formulated the optimization objective at each step as:
\begin{align}
\label{eq: most violated}
\textstyle
 \min_{\lambda \geq 0} \max_\theta V(\pi_\theta) + \lambda_{ij} \big[ \epsilon - | \mathbb{E}_{x \sim \mathcal{P}_{\mathcal{X}^i},\, a \sim \pi(\cdot \mid x)}  r(x, a) - \mathbb{E}_{x \sim \mathcal{P}_{\mathcal{X}^j},\, a \sim \pi(\cdot \mid x)}  r(x, a) |\big],
\end{align}
where $i$ and $j$ are the most violated group pair. 
For the Lagrange multiplier, we have: 
\begin{align}
\label{app_eq: dual}
    \nabla_{\lambda_{ij}} L = \epsilon - \big[ \hat{V}^{\text{DR}}_{X^j}(\pi_\theta) - \hat{V}^{\text{DR}}_{X^i}(\pi_\theta)\big], 
    \nabla_{\eta_{ij}} L = \epsilon - \big[\hat{V}^{\text{DR}}_{X^i}(\pi_\theta) - \hat{V}^{\text{DR}}_{X^j}(\pi_\theta) \big].
\end{align}
Then, we optimize the policy with:
\begin{align}
 \label{app_eq: multi-group gradient}
    \nabla_\theta L 
    &= \mathbb{E}_{x,a \sim \pi_\theta} \nabla_\theta \log \pi_\theta(a \mid x) r(x,a) \notag \\
    & = \mathbb{E}_{x,a \sim \pi_\theta, X} \nabla_\theta \log \pi_\theta(a \mid x) r(x,a)    \big[(I(x \in X^i)(1 - \lambda_{ij}+ \eta_{ij})+I(x \in X^j) (1 + \lambda_{ij} - \eta_{ij}) + I(x \notin X^{ij})  \big]. 
\end{align}

\textbf{Remark} 
\Cref{app_eq: dual} and \Cref{app_eq: multi-group gradient} are the gradient of the dual variable and the policy gradient of each step, given the most violated group pair $(i,j)$. The policy gradient will ``up-weight'' the lower reward group, ``down-weight'' the higher reward group, while keeping the same weight of all other groups. We summarize the algorithm in \Cref{algo: multigroups}.

\begin{algorithm}
\caption{Group-sensitive offline contextual bandits for multiple groups}
\label{algo: multigroups}
\begin{algorithmic}[1]
    \State \textbf{Input:} Offline dataset: $S = \{x, a, r, \pi_\beta(a|x)\}$, threshold $\epsilon$, sensitive feature. 
    \State \textbf{Output:} Learned policy $\pi_\theta$.
    \State \textbf{Initialization:} policy $\pi_\theta$, reward model $\hat{r}$, policy learning rate $\alpha$, dual variable update rate $\beta$. 
    \State Learn the reward estimator $\hat{r}$ from $S$.
    \For{t = 0, ..., T}
        \State Sample context-action pairs $D_{\pi_\theta}$.
        \State Estimate the $\hat{V}^{\text{DR}}_{X^i}(\pi_\theta)$ for every group $i$ 
        \State Select the most violated pairs $(i,j) = \argmax |\hat{V}^{\text{DR}}_{X^i}(\pi_\theta) - \hat{V}^{\text{DR}}_{X^j}(\pi_\theta) |$
        \State Update $\theta_{t+1} = 
        \theta_t + \alpha \nabla L$, with \Cref{eq:policy_gradient_with_lambda}
        \State Update the dual variable for $(i,j)$ with \Cref{app_eq: dual} and update the policy with \Cref{app_eq: multi-group gradient}
    \EndFor
    \State \textbf{Return} $\pi_{\theta_T}$
\end{algorithmic}
\end{algorithm}

We show the experiment results in \Cref{app_tab:drug_edu_multi}. We show the \emph{maximum disparity}, which is the maximum disparity between any two groups. We can see that our method can reduce the reward disparity while maintaining a similar reward.  
\begin{table}[H]
\centering
\caption{Comparison of unconstrained baseline with our method on multi-group settings. We use the Drug Consumption dataset and set Education as the sensitive feature, and three groups are separated into below high school, high school, and above high school. We set $\epsilon = 0.07$ and $\epsilon = 0.05$ and report the reward and maximum disparity between any two groups.}
\label{app_tab:drug_edu_multi}
\begin{tabular}{ll|cc|cc}
\toprule
\multirow{3}{*}{$\epsilon$} &\multirow{3}{*}{Off-policy} & \multicolumn{2}{c|}{Unconstrained} & \multicolumn{2}{c}{Ours} \\
 & & Reward & \shortstack{Maximum\\Disparity} & Reward & \shortstack{Maximum\\Disparity}  \\
\midrule
\multirow{3}{*}{0.07} & Random  & $0.515_{\pm 0.043}$ & $0.102_{\pm 0.042}$ & $0.488_{\pm 0.052}$ & $0.055_{\pm 0.021}$ \\
& Tweak-1 & $0.471_{\pm {0.033}}$ & $0.096_{\pm 0.053}$ & $0.486_{\pm 0.041}$ & $0.073_{\pm 0.028}$ \\
& Mixed& $0.492_{\pm0.038}$ & $0.094_{\pm 0.051}$ & $0.472_{\pm 0.048}$ & $0.048_{\pm 0.039}$ \\

\midrule
\multirow{3}{*}{0.05} & Random  & $0.515_{\pm 0.043}$ & $0.102_{\pm 0.042}$ & $0.484_{\pm 0.049}$ & $0.048_{\pm0.027}$ \\
& Tweak-1 & $0.471_{\pm 0.033}$ & $0.096_{\pm 0.053}$ & $0.487_{\pm 0.054}$ & $0.025_{\pm0.009}$ \\
& Mixed& $0.492_{\pm 0.038}$ & $0.094_{\pm 0.051}$ & $0.499_{\pm 0.042}$ & $0.041_{\pm 0.029}$ \\

\bottomrule
\end{tabular}

\end{table}

\section{Conclusion and Limitations}
In this paper, we tackle a new group-sensitive offline contextual bandits problem where the reward disparity between groups is controlled for policy optimization. We develop a new algorithm: off-policy group-constrained policy gradient algorithm (GC-PG) to learn a policy under constraints. We leverage a doubly robust estimator in the policy gradient-based optimization with Lagrangian relaxation. Our method addresses fairness concerns without significantly compromising the overall reward. Theoretical analysis shows that our method has a convergence rate of $O(1/T)$, while empirical results on both synthetic and real-world datasets demonstrate the effectiveness of our approach in achieving equitable outcomes across demographic groups.

\section*{Acknowledgement}
This work is partially supported by the Center for Digital Health and Artificial Intelligence (CDHAI).

\clearpage
\newpage
\appendix

\crefalias{section}{appendix}
\crefalias{subsection}{appendix}
\crefalias{subsubsection}{appendix}
\section{Theoretical Analysis}
\label{section: proof}
\begin{lemma}
\label{lemma: variance}
(Variance of the Doubly Robust Estimator, Theorem 2 in \cite{dudik2011doubly}) Let $\Delta(a, x) = \hat{r}(x, a) - r(x, a)$, let $\xi = \frac{(r(x, a) - \hat{r}(x, a))\pi_\theta(a|x)}{\pi_\beta(a|x)}$, then the variance of the doubly robust estimator is:
\begin{align*}
     &\text{Var}[\hat{V}^{\text{DR}}(\pi_\theta)] 
    =   \frac{1}{|D|} \bigg[  \mathbb{E}_{(x, a, r) \sim D}  \left[ \xi^2 \right]+ \text{Var}_x \left[ r(x,\pi_\theta(x)) + \Delta(1-\frac{\pi_b(a|x)}{\hat{\pi}_{b}(a|x)}) \right] + \mathbb{E}_{(x,a) \sim D} [ \frac{1 - \pi_\theta(a|x)}{\pi_\theta(a|x)} \cdot \Delta  ]^2 \bigg].
\end{align*}
\end{lemma}
The variance of the doubly robust estimator is derived from Theorem 2 in \cite{dudik2011doubly}, vanishes in the order of $\frac{1}{|D|}$. It consists of three terms. The first one is the reward estimation error. The second one is the variance of the reward estimation and the logging policy estimation. The third one can be viewed as the importance weight penalty. Intuitively, the high variance of the estimator will lead to high variance in the policy gradient, thereby destabilizing policy optimization.  

\begin{lemma}
(Logging policy error $1-\pi/\hat\pi$)
Let $(x_i,a_i)\sim D$ and let $\pi^*(a|x)$ be the true logging policy. Fit $\pi_\theta(a|x)$ by minimizing the cross-entropy loss $\ell(\theta;x,a):=-\log\pi_\theta(a|x)$. Assume:
	1.	$\exists \gamma\in(0,1)$ s.t. $\pi_\theta(a|x)\ge\gamma$ for all $x,a,\theta$ (so $0\le\ell\le B:=-\log\gamma$);
	2.	$\hat\theta$ is ERM: $\hat L_n(\hat\theta)\le\hat L_n(\theta^\star)$.
Under standard uniform convergence results for bounded loss classes via Rademacher complexity, with probability at least $1-\delta$,
\label{lemma: policy error}
\begin{align*}
E_{x,a}\left|1-\frac{\pi^\star(a|x)}{\pi_{\hat\theta}(a|x)}\right|\le O\left(\left(\frac{C+\log(1/\delta)}{n}\right)^{1/4}\right).
\end{align*}
\end{lemma}

\begin{proof}
    We first rewrite the cross-entropy loss with KL divergence
\begin{align}
\label{appeq: ce loss}
    L(\theta) = \mathbb{E}_x\Bigl[H\bigl(\pi^\star(\cdot| x)\bigr)+ KL\bigl(\pi^\star(\cdot| x)\|\pi_\theta(\cdot| x)\bigr)
    \Bigr].
\end{align}

Let $\theta^*$ be the parameters for $\pi^*$, we hae
\begin{align*}
  KL\bigl(\pi^\star(\cdot\mid x)\|\pi_{\theta^\star}(\cdot\mid x)\bigr) = 0,  
\end{align*}
and
\begin{align}
    \label{appeq: entropy}
    L(\theta^\star) = \mathbb{E}_x H\bigl(\pi^\star(\cdot\mid x)\bigr).
\end{align}

Subtracting \Cref{appeq: entropy} from \Cref{appeq: ce loss} yields, for any $\theta$,
$$
L(\theta)-L(\theta^\star)
= \mathbb{E}_x KL\bigl(\pi^\star(\cdot\mid x)\| \pi_\theta(\cdot\mid x)\bigr).
$$

With the ERM property we have,
$$
\hat L_n(\hat\theta) \le \hat L_n(\theta^\star).
$$
Then
$$
\begin{aligned}
L(\hat\theta) - L(\theta^\star)
&= \bigl(L(\hat\theta) - \hat L_n(\hat\theta)\bigr)
	+	\bigl(\hat L_n(\hat\theta) - \hat L_n(\theta^\star)\bigr)+	\bigl(\hat L_n(\theta^\star) - L(\theta^\star)\bigr) \\
&\le |L(\hat\theta) - \hat L_n(\hat\theta)|
	+	0
	+	|L(\theta^\star) - \hat L_n(\theta^\star)| \\
&\le \beta_n + \beta_n
= 2\beta_n \leq 2cB\sqrt{\frac{C + \log(1/\delta)}{n}},
\end{aligned}
$$
which holds with probability at least $1-\delta$ under standard Rademacher assumptions for the loss class. $C$ is a complexity measure (e.g., pseudo-dimension of $\mathcal{F}$). 
Then we have:
$$
\mathbb{E}_x KL\bigl(\pi^\star(\cdot\mid X) \| \pi_{\hat\theta}(\cdot\mid X)\bigr)
\le 2\beta_n.
$$

Now we convert the KL bound to a bound on the pointwise probability error.
For each $x$, define
$$
KL_x := KL\bigl(\pi^\star(\cdot\mid x)\|\pi_{\hat\theta}(\cdot\mid x)\bigr),
\qquad
TV_x := \frac12 \sum_{a\in\mathcal{A}}\bigl|\pi^\star(a\mid x)-\pi_{\hat\theta}(a\mid x)\bigr|.
$$
By Pinsker’s inequality,
$$
TV_x \le \sqrt{\tfrac12 KL_x}.
$$

Taking expectation over $X$ and applying Jensen’s inequality (since $\sqrt{\cdot}$ is concave),
$$
\begin{aligned}
\mathbb{E}_x[TV_x]
\le \mathbb{E}_x\Bigl[\sqrt{\tfrac12 KL_x}\Bigr]
\le\sqrt{\tfrac12\mathbb{E}_x[KL_x]} 
\le \sqrt{\tfrac12\cdot 2\beta_n}
= \sqrt{\beta_n}.
\end{aligned}
$$

Moreover, for any fixed $(x,a)$,
$$
\bigl|\pi^\star(a\mid x)-\pi_{\hat\theta}(a\mid x)\bigr|
\le \sum_{a’}\bigl|\pi^\star(a’\mid x)-\pi_{\hat\theta}(a’\mid x)\bigr|
= 2TV_x.
$$

Therefore,
$$
\begin{aligned}
\mathbb{E}_{x,a}\bigl|\pi^\star(a\mid x)-\pi_{\hat\theta}(a\mid x)\bigr|
&= \mathbb{E}_x\left[
\sum_a \pi^\star(a\mid x)
\bigl|\pi^\star(a\mid x)-\pi_{\hat\theta}(a\mid x)\bigr|
\right] \\
&\le \mathbb{E}_x\left[
\sum_a \pi^\star(a\mid x)2TV_x
\right]
= 2\mathbb{E}_x[TV_x] 
\le 2\sqrt{\beta_n}.
\end{aligned}
$$

Finally, we relate this to the relative error term.
By the assumption 1, for all $(x,a)$ we have $\pi_{\hat\theta}(a\mid x)\ge\gamma$, hence
$$
\left|1 - \frac{\pi^\star(a\mid x)}{\pi_{\hat\theta}(a\mid x)}\right|
= \left|\frac{\pi_{\hat\theta}(a\mid x)-\pi^\star(a\mid x)}{\pi_{\hat\theta}(a\mid x)}\right|
\le \frac{\bigl|\pi^\star(a\mid x)-\pi_{\hat\theta}(a\mid x)\bigr|}{\gamma}.
$$

Taking expectation over $(x,a)$ and using the previous bound,
$$
\begin{aligned}
\mathbb{E}_{x,a}
\left|
1 - \frac{\pi^\star(a\mid x)}{\pi_{\hat\theta}(a\mid x)}
\right|
&\le \frac{1}{\gamma} \cdot
\mathbb{E}_{x,a}\bigl|\pi^\star(a\mid x)-\pi_{\hat\theta}(a\mid x)\bigr| 
\le \frac{1}{\gamma}\cdot 2\sqrt{\beta_n}
= \frac{2}{\gamma}\sqrt{\beta_n}.
\end{aligned}
$$

Plugging this into the lemma yields
$$
\mathbb{E}_{x,a}
\left|
1 - \frac{\pi^\star(a\mid x)}{\pi_{\hat\theta}(a\mid x)}
\right|
= O\left(
\left(\frac{C + \log(1/\delta)}{n}\right)^{1/4}
\right),
$$
up to constants depending on $B$ and $\gamma$.
This holds with probability at least $1-\delta$, which completes the proof. $\square$ 
\end{proof}

\begin{lemma} (Reward estimation error $|r-\hat r|$) Let $(x_i,a_i,y_i) \sim D$ and let $r(x,a)$ be the true reward. Fit $f\in\mathcal F$ by MSE
$\ell(f;x,a,y):=(y-f(x,a))^2$.
Assume:
	1.	$\exists M,|y|\le M$ and $|f(x,a)|\le M$ for all $x,a,f$, hence $0\le\ell\le B_1:=(2M)^2$;
	2.	$\hat f$ is ERM: $\hat L_n(\hat f)\le\hat L_n(f^\star)$.
Under standard uniform convergence results for bounded loss classes via Rademacher complexity, with probability at least $1-\delta$,
\label{lemma: reward error}
\begin{align*}
E_{x,a}\big|\hat f(x,a)-r(x,a)\big|
\le
O\left(\left(\frac{C+\log(1/\delta)}{n}\right)^{1/4}\right).
\end{align*}
\end{lemma}

\begin{proof}
For any $f\in\mathcal F$, define the population and empirical risks
\[
L(f) := \mathbb{E}\bigl[(y-f(x,a))^2\bigr],
\qquad
\hat L_n(f) := \frac{1}{n}\sum_{i=1}^n (y_i - f(x_i,a_i))^2.
\]
By assumption, $r(x,a) = \mathbb{E}[y\mid x,a]$ is the unique minimizer of $L(f)$ over all measurable $f$, and by well-specification we have $r\in\mathcal F$, so we can take $f^\star = r$.

Since the loss is bounded, $0\le \ell\le B_1$, standard Rademacher complexity bounds for bounded loss classes (uniform convergence) imply that, with probability at least $1-\delta$,
\begin{equation}
\label{eq:uniform-conv}
\sup_{f\in\mathcal F} \bigl|L(f)-\hat L_n(f)\bigr|
\;\le\;
\varepsilon_n
:= O\!\left(\sqrt{\frac{C+\log(1/\delta)}{n}}\right),
\end{equation}
where $C$ is a complexity term depending on the (empirical) Rademacher complexity of the loss class induced by $\mathcal F$.

Because $\hat f$ is an ERM, $\hat L_n(\hat f)\le \hat L_n(f^\star)$, hence
\begin{align*}
L(\hat f) - L(f^\star)
&= \bigl(L(\hat f)-\hat L_n(\hat f)\bigr)
   + \bigl(\hat L_n(\hat f)-\hat L_n(f^\star)\bigr)
   + \bigl(\hat L_n(f^\star)-L(f^\star)\bigr) \\
&\le \bigl|L(\hat f)-\hat L_n(\hat f)\bigr|
   + 0
   + \bigl|\hat L_n(f^\star)-L(f^\star)\bigr|  \\
&\le 2\,\sup_{f\in\mathcal F} \bigl|L(f)-\hat L_n(f)\bigr|
\;\le\; 2\,\varepsilon_n.
\end{align*}
Thus
\begin{equation}
\label{eq:excess-risk}
L(\hat f) - L(f^\star)
\;\le\; 2\,\varepsilon_n
= O\!\left(\sqrt{\frac{C+\log(1/\delta)}{n}}\right).
\end{equation}

On the other hand, for squared loss with conditional mean target,
the excess risk equals the squared $L_2$ distance between $\hat f$ and $r$:
\begin{align*}
L(\hat f) - L(r)
&= \mathbb{E}\bigl[(y-\hat f(x,a))^2 - (y-r(x,a))^2\bigr] \\
&= \mathbb{E}\bigl[(\hat f(x,a)-r(x,a))^2\bigr],
\end{align*}
where we used that $r(x,a)=\mathbb{E}[y\mid x,a]$ and expand the squares.
Since $f^\star = r$, we have
\[
\mathbb{E}\bigl[(\hat f(x,a)-r(x,a))^2\bigr]
= L(\hat f) - L(f^\star)
\;\le\; 2\,\varepsilon_n.
\]

Finally, apply Jensen's inequality (or Cauchy--Schwarz) to relate $L_1$ and $L_2$ errors:
\[
\mathbb{E}_{x,a}\bigl|\hat f(x,a)-r(x,a)\bigr|
\;\le\;
\Bigl(\mathbb{E}_{x,a}\bigl[(\hat f(x,a)-r(x,a))^2\bigr]\Bigr)^{1/2}
\;\le\; (2\,\varepsilon_n)^{1/2}.
\]
Using $\varepsilon_n = O\!\left(\sqrt{\frac{C+\log(1/\delta)}{n}}\right)$, we obtain
\[
\mathbb{E}_{x,a}\bigl|\hat f(x,a)-r(x,a)\bigr|
\;\le\;
O\left(\left(\frac{C+\log(1/\delta)}{n}\right)^{1/4}\right),
\]
which proves the lemma.
\end{proof}
\setcounter{section}{4}   
\setcounter{theorem}{2}
\renewcommand{\thesection}{\arabic{section}}
\renewcommand{\thetheorem}
{\thesection.\arabic{theorem}}
Now, we provide the proof of the \Cref{lemma: bias}, \Cref{theorem: reward mse}, \Cref{theorem: approximation error}, and \Cref{theorem: gradient vanish}.
\begin{lemma} (Bias of the doubly robust estimator). The bias of the doubly robust estimator under the Rademacher complexity is bounded with $1-2\delta$ probability: 
    \begin{align*}
        \text{Bias}^2(\hat{V}^\text{Dr}(\pi_\theta)) = E_{x}[(\hat{r}(x,a) - r(x,a))(1-\frac{\pi_b(a|x)}{\hat{\pi}_b(a|x)})]^2 \leq O(\frac{\log(1/\delta)}{|D|})
    \end{align*}
where $\pi_b$ is the logging policy and $\hat{\pi}_b(a|x)$ is the estimated logging policy. 
\end{lemma}
\begin{proof}
\begin{align*}
    \text{Bias}^2(\hat{V}^\text{Dr}(\pi_\theta)) &= E_{x}[(\hat{r}(x,a) - r(x,a))(1-\frac{\pi_b(a|x)}{\hat{\pi}_b(a|x)})]^2 \\
    & = E_{x}|(\hat{r}(x,a) - r(x,a)|^2 E_{x}|1-\frac{\pi_b(a|x)}{\hat{\pi}_b(a|x)}|^2 \\
    &\leq O(\frac{\log(1/\delta)}{|D|}).
\end{align*}
The first equality is the bias of the doubly robust estimator from Theorem 1 in \cite{dudik2011doubly}. And the last inequality comes from plugging the reward estimation error bound and policy estimation error bound in \cref{lemma: policy error} and \Cref{lemma: reward error}.
\end{proof}

\begin{theorem}
\label{theorem: reward mse}
The MSE between the DR estimator and the true reward is bounded with probability $1-2\delta$ as follows:
\begin{align*}
        \text{MSE}_\text{DR} = & \mathbb{E}_{(x_i, a_i, r_i) \sim D} \bigg[ \hat{r}(x_i, \pi_{\theta}(x_i))  + \frac{\pi_{\theta}(a_i \mid x_i)}{\pi_b(a_i \mid x_i)} ( r_i - \hat{r}(x_i, a_i)) - r(x_i, \pi_\theta( x_i)) \bigg]^2 \nonumber  \\ 
        = & \text{Var}[\hat{V}^{\text{DR}}(\pi_\theta)] + \text{Bias}^2(\hat{V}^{\text{DR}}(\pi_\theta)) \\
        \leq & O(1/|D|) + O(\frac{\log(1/\delta)}{|D|})
    \end{align*}
\end{theorem}
\begin{proof}
We can directly plug in the bias and variance property of the doubly robust estimator in \Cref{lemma: bias} and \Cref{lemma: variance} to prove it. 
\end{proof}
\begin{theorem}
    (Estimation error of the gradient) 
    With probability at least $1- 2 \delta$, the generalization
bound of policy gradient estimation satisfy:  
{\small
\[
\textstyle
 \mathbb{E}_{\pi_{\theta_t}}\left[\left\| \hat{g}(\theta_t) - \nabla V^\pi \theta_t \right\|_2^2 \right] \leq G^2 (1 + 2B)^2 MSE_DR.
\]
}
\end{theorem}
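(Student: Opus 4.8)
The plan is to bound the squared norm of the gradient estimation error by decomposing it into the product of a bounded "score/weight" factor and the reward estimation error, then invoke \Cref{theorem: reward mse}. First I would write the policy gradient estimator and the true gradient in a common form. The true gradient is $\nabla_\theta V^\pi_{\theta_t} = \EE_{x,a\sim\pi_{\theta_t}}[\nabla_\theta\log\pi_{\theta_t}(a\mid x)\, r(x,a)\, w(x)]$ where $w(x) = I(x\in X^1)(1-\lambda+\eta) + I(x\in X^0)(1+\lambda-\eta)$ is the group-rescaling weight from \Cref{eq:policy_gradient_with_lambda}, and the empirical estimator $\hat{g}(\theta_t)$ has the identical form but with $\hat{r}_{\text{dr}}(x,a)$ in place of $r(x,a)$. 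Taking the difference, the $\nabla_\theta\log\pi$ and $w(x)$ factors are common, so the error is $\EE_{x,a\sim\pi_{\theta_t}}[\nabla_\theta\log\pi_{\theta_t}(a\mid x)\, (\hat{r}_{\text{dr}}(x,a) - r(x,a))\, w(x)]$.

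Next I would apply norms and bounds. By Jensen's inequality (or Cauchy--Schwarz on the expectation), $\|\EE[\cdots]\|_2^2 \leq \EE[\|\nabla_\theta\log\pi_{\theta_t}(a\mid x)\|_2^2\, (\hat{r}_{\text{dr}}(x,a)-r(x,a))^2\, w(x)^2]$. By Assumption 1, $\|\nabla_\theta\log\pi_{\theta_t}(a\mid x)\|_2 \leq G$, so this contributes a factor $G^2$. For the weight, since $\lambda,\eta\in(0,B]$ by \Cref{assumption:dual bounded}, we have $|1-\lambda+\eta| \leq 1 + B$ and $|1+\lambda-\eta| \leq 1+B$; a looser but sufficient bound giving exactly the stated constant is $|w(x)| \leq 1 + 2B$ (bounding $|\lambda - \eta| \leq 2B$ crudely via the triangle inequality on $\lambda$ and $\eta$ separately, since only one indicator is active this is in fact $\leq 1+B$, but $1+2B$ is the form that appears), so $w(x)^2 \leq (1+2B)^2$. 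Pulling both constants out leaves $G^2(1+2B)^2\, \EE_{x,a\sim\pi_{\theta_t}}[(\hat{r}_{\text{dr}}(x,a)-r(x,a))^2]$, and the remaining expectation is precisely the MSE of the DR estimator controlled by \Cref{theorem: reward mse}, namely $\leq \text{Var}[\hat{V}^{\text{DR}}(\pi_{\theta_t})] + 3R^2$. Chaining these gives the claimed bound.

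The main obstacle I anticipate is making the measure-change and sampling arguments precise: the estimator $\hat{g}(\theta_t)$ is an empirical average over simulated context--action pairs drawn from $\pi_{\theta_t}$ (the "on-policy simulation" $\mathcal{S}_{\pi_\theta}$), whereas $\hat{r}_{\text{dr}}$ itself is built from the logging data $D$ via importance weighting, so one must be careful about which randomness the outer expectation $\EE_{\pi_{\theta_t}}$ integrates over and whether the "with probability at least $1-\delta$" clause refers to the randomness in learning $\hat{r}$ from the finite sample $D$ (a generalization/concentration step) or is effectively vacuous once one passes to the MSE bound of \Cref{theorem: reward mse}. I would handle this by treating $\hat{r}$ (equivalently $\hat{r}_{\text{dr}}$) as fixed after the reward-model fitting stage and interpreting the bound as conditional on that fit, so that the only remaining expectation is over $(x,a)\sim\pi_{\theta_t}$ (and the offline sample inside the DR correction term), which is exactly the setting of Lemma~\ref{lemma: variance} and Theorem~\ref{theorem: reward mse}. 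A secondary, minor point is justifying the step $\|\EE[\cdot]\|^2 \leq \EE[\|\cdot\|^2]$ and the factorization of the product under the expectation; both follow from Jensen and from pulling out the deterministic bounds $G$ and $1+2B$, so no real difficulty arises there beyond bookkeeping.
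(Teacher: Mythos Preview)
Your proposal is correct and follows essentially the same route as the paper's proof: write both $\hat{g}(\theta_t)$ and the true gradient in the common form $\nabla_\theta\log\pi_{\theta_t}(a\mid x)\cdot w(x)\cdot(\text{reward term})$, factor out the score and weight, bound $\|\nabla_\theta\log\pi_{\theta_t}\|_2\leq G$ via Assumption~1 and $|w(x)|\leq 1+2B$ via \Cref{assumption:dual bounded}, and then invoke \Cref{theorem: reward mse} on the remaining reward MSE. Your discussion of the $1-\delta$ clause and of the Jensen step is, if anything, more careful than the paper's own argument, which simply writes the pointwise bound without elaborating on the sampling measure.
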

\begin{proof}
    For simplicity, let $\delta_t$ defined as follow:
    \begin{align*}
    \delta_t = 
    \left\{
    \begin{array}{ll}
    1 + \lambda_t - \eta_t & \text{if } x \in X^1 \\
    1 - \lambda_t + \eta_t & \text{if } x \in X^0
    \end{array}\right.,
    \end{align*}
    where $1 + \lambda_t - \eta_t$ and $1 - \lambda_t + \eta_t$ are the dual variables from \Cref{eq:policy_gradient_with_lambda}. Then the gradient in \Cref{eq:policy_gradient_with_lambda} becomes:
    \begin{align*}
        \nabla \log \pi_{\theta_t} \delta_t \hat{r}_{\text{DR}}
    \end{align*}
    Also, following \Cref{assumption:dual bounded}, we have $\lambda_t \leq B$ and  $\eta_t \leq B$. Then $\delta_t \leq 2B + 1$. So we have:
    \begin{align*}
        \mathbb{E}_{\pi_{\theta_t}} \left[ \left\| \hat{g}(\theta_t) - \nabla V^\pi (\theta_t) \right\|_2^2 \right] &\leq \EE_{\pi_{\theta_t}} \left[ \| \nabla \log \pi_{\theta_t} \delta_t \hat{r}_{\text{DR}} - \nabla \log \pi_{\theta_t} \delta_t r \|_2^2\right] \\
        & \leq \EE_{\pi_{\theta_t}} \left[ \| \nabla \log \pi_{\theta_t}\|_2^2 \delta_t^2 (\hat{r}_{\text{DR}}- r)^2 \right] \\
        & \leq G^2(2B + 1)^2 \text{MSE}_text{DR}
    \end{align*}
\end{proof}

We prove the convergence of the optimization. We show that the policy parameter $\theta$ optimized with policy gradient converges to a stationary point such that $\| \nabla V(\pi_{\theta}) \|_2 \leq \epsilon + C$ for any $\epsilon > 0$, where $C$ is a constant determined by the offline dataset and model approximation error.

\begin{theorem}
    (Gradient vanish). Assuming the value function $V(\pi_\theta)$ is L-smooth for every $\theta$, let $T$ be the policy optimization steps, let the learning rate $\alpha = \frac{1}{4L}$. Then the learned policy $\pi_{\theta_T}$ satisfies:
    \begin{align*}
    \mathbb{E} [ \left\| \nabla V^\pi (\theta_{\hat{T}}) \right\|_2^2 ]\leq&  \frac{4}{T\eta} ( V^\pi (\theta_{T+1}) - V^\pi (\theta_1 )) + 3 G^2 (1+2B)^2  \text{MSE}_\text{DR}.
    \end{align*}
\end{theorem}
\begin{proof}
    For simplicity, we denote the $V(\theta) \overset{\triangle}{=}V(\pi_\theta)$.
    
    Given the assumption that $V(\theta)$ is $L$-smooth for every $\theta$ and the update rule $\theta_{t+1}=\theta_{t}+\alpha \hat  g(\theta_{t})$, we have
\begin{align}
    V(\theta_{t+1})&\geq V(\theta_{t})+\la\nabla V(\theta_{t}),\theta_{t+1}-\theta_{t}\ra-\frac{L}{2}\|\theta_{t+1}-\theta_{t}\|_2^2\notag\\
    &=V(\theta_{t})+\alpha\la\nabla V(\theta_{t}),\hat g(\theta_{t})\ra-\frac{L}{2}\|\theta_{t+1}-\theta_{t}\|_2^2\notag\\
    &=V(\theta_{t})+\alpha\|\nabla V(\theta_{t})\|_2^2+\alpha\la\nabla V(\theta_{t}),\hat g(\theta_{t})-\nabla V(\theta_{t})\ra-\frac{L}{2}\|\theta_{t+1}-\theta_{t}\|_2^2. \label{eq:half1}
\end{align}
Due to the fact that $\|\nabla V(\theta_{t})\|_2^2+\|\nabla g(\theta_{t})-\nabla V(\theta_{t})\|_2^2\geq 2|\la\nabla V(\theta_{t}),\hat g(\theta_{t})-\nabla V(\theta_{t})\ra|$, we have
\begin{align}
    \eqref{eq:half1}&\geq V(\theta_{t})+\alpha\|\nabla V(\theta_{t})\|_2^2-\frac{\alpha}{2}\|\nabla V(\theta_{t})\|_2^2-\frac{\alpha}{2}\|\hat g(\theta_{t})-\nabla V(\theta_{t})\|_2^2-\frac{L}{2}\alpha^2\|\hat g(\theta_{t})\|_2^2\notag\\
    &\geq V(\theta_{t})+\alpha\|\nabla V(\theta_{t})\|_2^2-\frac{\alpha}{2}\|\nabla V(\theta_{t})\|_2^2-\frac{\alpha}{2}\|\hat g(\theta_{t})-\nabla V(\theta_{t})\|_2^2\notag\\
    &\qquad-L\alpha^2\|\nabla V(\theta_{t})\|_2^2-L\alpha^2\|\hat g(\theta_{t})-\nabla V(\theta_{t})\|_2^2\label{eq:cauchy}\\
    &=V(\theta_{t})+(\alpha/2-L\alpha^2)\|\nabla V(\theta_{t})\|_2^2-(\alpha/2+L\alpha^2)\|\hat g(\theta_{t})-\nabla V(\theta_{t})\|_2^2\notag,
\end{align}
where \eqref{eq:cauchy} holds due to mean inequality  $\|\nabla V(\theta_{t})\|_2^2+\|\hat g(\theta_{t})-\nabla V(\theta_{t})\|_2^2\geq \frac 12 \|\hat g(\theta_{t})\|^2$.

Rearranging the above inequality yields
\begin{align*}
    (\alpha/2-L\alpha^2)\|\nabla V(\theta_{t})\|_2^2\leq V(\theta_{t+1})-V(\theta_{t})+(\alpha/2+L\alpha^2)\|\hat g(\theta_{t})-\nabla V(\theta_{t})\|_2^2.
\end{align*}
Now we take the expectation over $\pi_{\theta_t}$ on both sides of the above inequality and then sum it up over $t=1,2,\cdots,T$. We have
\begin{align*}
    \sum_{t=1}^{T}\EE_{\pi_{\theta_t}}\big[\|\nabla V(\theta_{t})\|_2^2\big]\leq\frac{V(\theta_{T+1})-V(\theta_{1})}{\alpha/2-L\alpha^2}+\frac{\alpha/2+L\alpha^2}{\alpha/2-L\alpha^2}\sum_{t=1}^{T}\EE_{\pi_{\theta_t}}\big[\| \hat g(\theta_{t})-\nabla V(\theta_{t})\|_2^2\big].
\end{align*}
Setting step size as $\alpha =  1/(4L)$ in the above inequality yields
\begin{align}
\label{eq:stepsize}
    \sum_{t=1}^{T}\EE_{\pi_{\theta_t}}\big[\|\nabla V(\theta_{t})\|_2^2\big]\leq \frac{V(\theta_{T+1})-V(\theta_{1})}{\alpha/4}+3\sum_{t=1}^{T}\EE_{\pi_{\theta_t}}\big[\|\hat g(\theta_{t})-\nabla V(\theta_{t})\|_2^2\big].
\end{align}
We define  $\theta_{\hat T}$ such that the index $\hat T$ is randomly picked from $\{1,\ldots,T\}$. By Jensen's inequality,  we have
\begin{align}
    \EE[\|\nabla V(\theta_{\hat T})\|_2^2]&\leq \frac{1}{T}\sum_{t=1}^{T}\EE_{\pi_{\theta_t}}\big[\|\nabla V(\theta_{t})\|_2^2\big]\label{eq:Jensen}.
\end{align}
Combining \Cref{eq:Jensen} with \Cref{eq:stepsize} and subsequently with Theorem \Cref{theorem: approximation error}, we can have
\begin{align*}
    \eqref{eq:Jensen}&\leq \frac{V(\theta_{T+1})-V(\theta_{1})}{T\alpha/4}+\frac{3}{T}\sum_{t=1}^{T}\EE_{\pi_{\theta_t}}\big[\|\hat g(\theta_{t})-\nabla V(\theta_{t})\|_2^2\big]\\
     &\leq \frac{V^{\pi_{\theta_{T+1}}}-V^{\pi_{\theta_1}}}{T\alpha/4}+3 G^2 (1+2B)^2  \text{MSE}_\text{DR}.
\end{align*}
 This completes the proof.
\end{proof}

\setcounter{section}{1}   
\renewcommand{\thesection}{\Alph{section}}
\renewcommand{\thetheorem}{\thesection.\arabic{theorem}}

\section{Additional Experiments}
\label{app: additional exp}

In this section, we present additional experimental results, including the experimental results on different $\epsilon$, and extending our method to multi-group settings. 

\subsection{Existence of reward disparity.}
In \Cref{tab_app:before_after_diff_arrow app}, we show additional experimental results with different logging policies to demonstrate that the reward disparity increases with policy optimization and unconstrained policies favor one group over another, showing performance gaps. 

\begin{table*}[ht]
\setlength{\tabcolsep}{4pt} 
\centering
\caption{Comparison of original reward v.s. after policy optimization without constraint. This demonstrates that the reward disparity increases with policy optimization without fairness. }
\label{tab_app:before_after_diff_arrow app}
\begin{tabular}{c|ccccccc}
\toprule
\multirow{2}{*}{\textbf{Off-policy}}&\multirow{2}{*}{\textbf{Data}}&\multirow{2}{*}{\textbf{Feature}} &\multicolumn{2}{c}{\textbf{Offline}} & \multicolumn{2}{c}{\textbf{Optimized}} & \multirow{2}{*}{\textbf{Disparity Increase}} \\
\cmidrule(lr){4-5} \cmidrule(lr){6-7}
&&&\textbf{$X^0$} & \textbf{$X^1$} & \textbf{$X^0$} & \textbf{$X^1$} & \\
\midrule
\multirow{5}{*}{Random} & Income & Gender & $0.498$ & $0.501$ & $0.902$ & $0.737$ & $0.003 \rightarrow 0.165$ \\
& Income & Race & $0.500$ & $0.502$ & $0.768$ & $0.852$ & $0.002 \rightarrow 0.084$ \\
& Drug & Gender & $0.252$ & $0.231$ & $0.515$ & $0.426$ & $0.021 \rightarrow 0.089$ \\
& Drug & Edu & $0.250$ & $0.256$ & $0.475$ & $0.561$ & $0.006 \rightarrow 0.086$ \\
& Student & Edu & $0.239$ & $0.218$ & $0.694$ & $0.556$ & $0.021 \rightarrow 0.138$ \\
\midrule
\multirow{5}{*}{Tweak-1} & Income & Gender & $0.385$ & $0.271$ & $0.901$ & $0.729$ & $0.114 \rightarrow 0.172$ \\
& Income & Race & $0.289$ & $0.354$ & $0.765$ & $0.857$ & $0.065 \rightarrow 0.092$ \\
& Drug & Gender & $0.310$ & $0.221$ & $0.519$ & $0.425$ & $0.089 \rightarrow 0.094$ \\
& Drug & Edu & $0.176$ & $0.307$ & $0.461$ & $0.533$ & $0.131 \rightarrow 0.072$ \\
& Student & Edu & $0.412$ & $0.376$ & $0.753$ & $0.595$ & $0.036 \rightarrow 0.158$ \\
\bottomrule
\end{tabular}
\end{table*}

\subsection{Additional experiment results}
In \Cref{tab: additional exp}, we show the additional experiment results on different $\epsilon$ on different datasets. Our method receives a higher reward compared with RobinHood, while receiving a similar reward disparity across different $\epsilon$. 

\begin{table*}[ht]
\centering
\setlength{\tabcolsep}{6pt}
\caption{Additional experiments on different $\epsilon$. Comparison of the \textbf{Unconstrained baseline} denoted as $\pi^\infty$ vs. \textbf{RobinHood} and \textbf{Ours}. Our method receives a lower reward disparity and also a similar overall reward compared with the unconstrained baseline. Our method also outperforms the RobinHood baseline, yielding higher rewards and a similar reward disparity around the $\epsilon$.}
\label{tab: additional exp}
\resizebox{\textwidth}{!}{\begin{tabular}{l l l | cc | cc | cc}
\toprule
\multicolumn{3}{c|}{} & \multicolumn{2}{c|}{\textbf{Unconstrained}} & \multicolumn{2}{c|}{\textbf{RobinHood}} & \multicolumn{2}{c}{\textbf{Ours}} \\
Dataset & Group & Off-policy & $\text{Reward}$ & $\text{Disparity}$ & $\text{Reward}$ & $\text{Disparity}$ & $\text{Reward}$ & $\text{Disparity}$ \\
\midrule
\multirow{6}{*}{Drug} & \multirow{3}{*}{\shortstack{Edu\\$\epsilon=0.05$}} 
  & Random  & $0.526_{\pm 0.034}$ & $0.086_{\pm 0.043}$ & $0.501_{\pm 0.012}$ & $0.035_{\pm 0.036}$ &           $0.533_{\pm 0.042}$  & $0.036_{\pm 0.019}$ \\
& & Tweak-1 & $0.501_{\pm 0.021}$ & $0.072_{\pm 0.031}$ & $0.449_{\pm 0.009}$ & $0.062_{\pm 0.031}$ &           $0.489_{\pm 0.055}$  & $0.052_{\pm 0.038}$ \\
& & Mixed   & $0.516_{\pm 0.051}$ & $0.073_{\pm 0.032}$ & $0.483_{\pm 0.013}$ & $0.052_{\pm 0.029}$ &           $0.514_{\pm 0.054}$  &  $0.046_{\pm 0.022}$ \\
\cmidrule(lr){2-9}
& \multirow{3}{*}{\shortstack{Gender\\$\epsilon=0.05$}}
  & Random  & $0.470_{\pm 0.023}$ & $0.081_{\pm 0.015}$ & $ 0.509_{\pm 0.028}$  & $0.060_{\pm 0.017}$          & $0.479_{\pm 0.029}$  & $0.044_{\pm 0.013}$ \\
& & Tweak-1 & $0.463_{\pm 0.050}$ & $0.094_{\pm 0.039}$ & $ 0.490_{\pm 0.039}$  & $0.048_{\pm 0.016}$          & $0.471_{\pm 0.055}$  & $0.064_{\pm 0.032}$ \\
& & Mixed   & $0.476_{\pm 0.039}$ & $0.116_{\pm 0.041}$ & $ 0.521_{\pm 0.055}$  & $0.068_{\pm 0.022}$          & $0.488_{\pm 0.072}$  & $0.046_{\pm 0.029}$ \\
\midrule
\multirow{9}{*}{Adult} & \multirow{3}{*}{\shortstack{Gender\\$\epsilon=0.1$}}
  & Random  & $0.792_{\pm 0.014}$ & $0.165_{\pm 0.015}$ &      $ 0.773_{\pm 0.042}$  & $0.101_{\pm 0.015}$      & $0.812_{\pm 0.043}$  & $0.089_{\pm 0.023}$ \\
& & Tweak-1 & $0.787_{\pm 0.039}$ & $0.172_{\pm 0.018}$ &      $ 0.768_{\pm 0.022}$  & $0.062_{\pm 0.009}$      & $0.802_{\pm 0.098}$  & $0.105_{\pm 0.021}$ \\
& & Mixed   & $0.797_{\pm 0.067}$ & $0.154_{\pm 0.019}$ &      $ 0.764_{\pm 0.017}$  & $0.077_{\pm 0.014}$      & $0.787_{\pm 0.082}$  & $0.01_{\pm 0.031}$  \\
\cmidrule(lr){2-9}
&  \multirow{3}{*}{\shortstack{Gender\\$\epsilon=0.05$}}
  & Random  & $0.792_{\pm 0.014}$ & $0.165_{\pm 0.015}$ &      $ 0.765_{\pm 0.038}$  & $0.086_{\pm 0.007}$      & $0.777_{\pm 0.087}$  & $0.046_{\pm 0.026}$ \\
& & Tweak-1 & $0.787_{\pm 0.039}$ & $0.172_{\pm 0.018}$ &      $ 0.754_{\pm 0.035}$  & $0.048_{\pm 0.023}$      & $0.709_{\pm 0.021}$  & $0.039_{\pm 0.031}$ \\
& & Mixed   & $0.797_{\pm 0.067}$ & $0.154_{\pm 0.019}$ &      $ 0.758_{\pm 0.024}$  & $0.049_{\pm 0.015}$      & $0.751_{\pm 0.070}$  & $0.053_{\pm 0.011}$ \\
\cmidrule(lr){2-9}
& \multirow{3}{*}{\shortstack{Race\\$\epsilon=0.03$}}
  & Random  & $0.805_{\pm 0.053}$ & $0.084_{\pm 0.005}$ &      $0.768_{\pm 0.019}$ & $0.035_{\pm 0.004}$      & $0.813_{\pm 0.049}$ & $0.054_{\pm 0.039}$ \\
& & Tweak-1 & $0.797_{\pm 0.047}$ & $0.092_{\pm 0.014}$ &      $0.777_{\pm 0.031}$ & $0.051_{\pm 0.014}$      & $0.799_{\pm 0.051}$ & $0.049_{\pm 0.021}$ \\
& & Mixed   & $0.805_{\pm 0.051}$ & $0.074_{\pm 0.027}$ &      $0.784_{\pm 0.032}$ & $0.068_{\pm 0.019}$      & $0.797_{\pm 0.022}$ & $0.066_{\pm 0.013}$ \\
\midrule
\multirow{3}{*}{Student} & \multirow{3}{*}{\shortstack{Edu\\$\epsilon=0.1$}}
  & Random  & $0.611_{\pm 0.017}$ & $0.138_{\pm 0.041}$ &      $ 0.515_{\pm 0.014}$  & $0.029_{\pm 0.023}$      & $0.607_{\pm 0.043}$ & $0.067_{\pm 0.032}$ \\
& & Tweak-1 & $0.672_{\pm 0.049}$ & $0.158_{\pm 0.039}$ &      $ 0.469_{\pm 0.011}$  & $0.028_{\pm 0.018}$      & $0.640_{\pm 0.041}$ & $0.083_{\pm 0.011}$  \\
& & Mixed   & $0.649_{\pm 0.042}$ & $0.132_{\pm 0.031}$ &      $ 0.502_{\pm 0.028}$  & $0.074_{\pm 0.042}$       & $0.626_{\pm 0.025}$ & $0.078_{\pm 0.009}$ \\
\bottomrule
\end{tabular}}
\end{table*}

\subsection{Trade-off between performance and fairness} 
\begin{figure*}[ht]
  \centering
  \setlength{\tabcolsep}{3pt}
  \begin{tabular}{@{}c c c@{}} %
    \includegraphics[width=.28\linewidth]{figure/group_0_scatter.pdf} &
    \includegraphics[width=.28\linewidth]{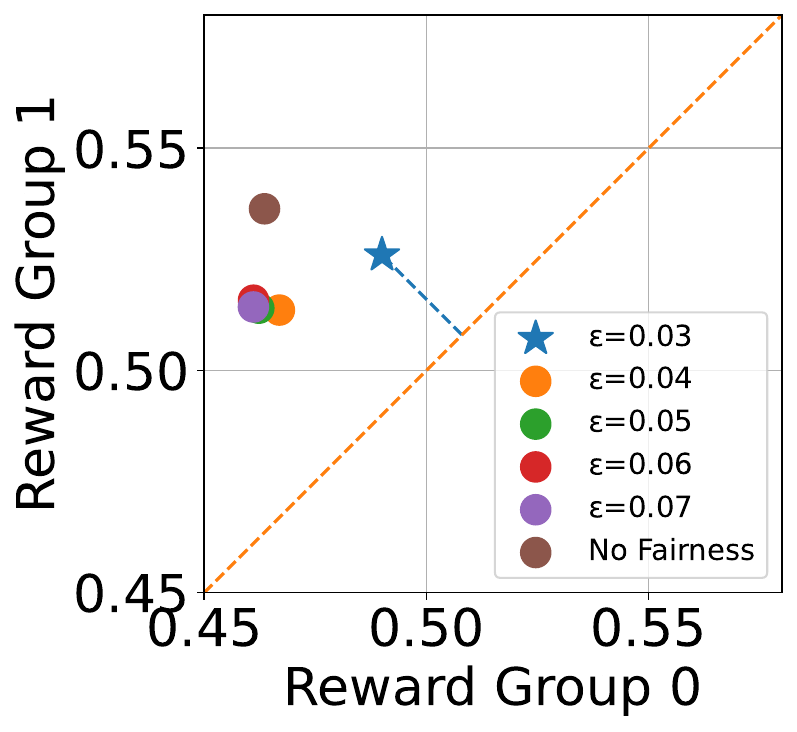} &
    \includegraphics[width=.28\linewidth]{figure/group_2_scatter.pdf} \\
    \small Random Policy & \small Tweak-1 Policy & \small Mixed Policy\\
    &\textbf{Education} & \\
    \includegraphics[width=.28\linewidth]{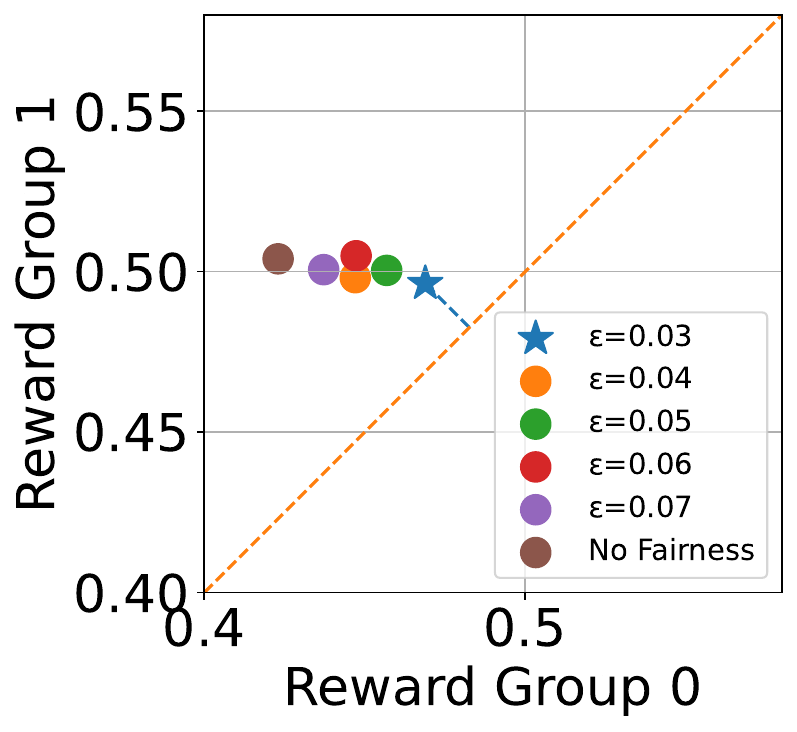} &
    \includegraphics[width=.28\linewidth]{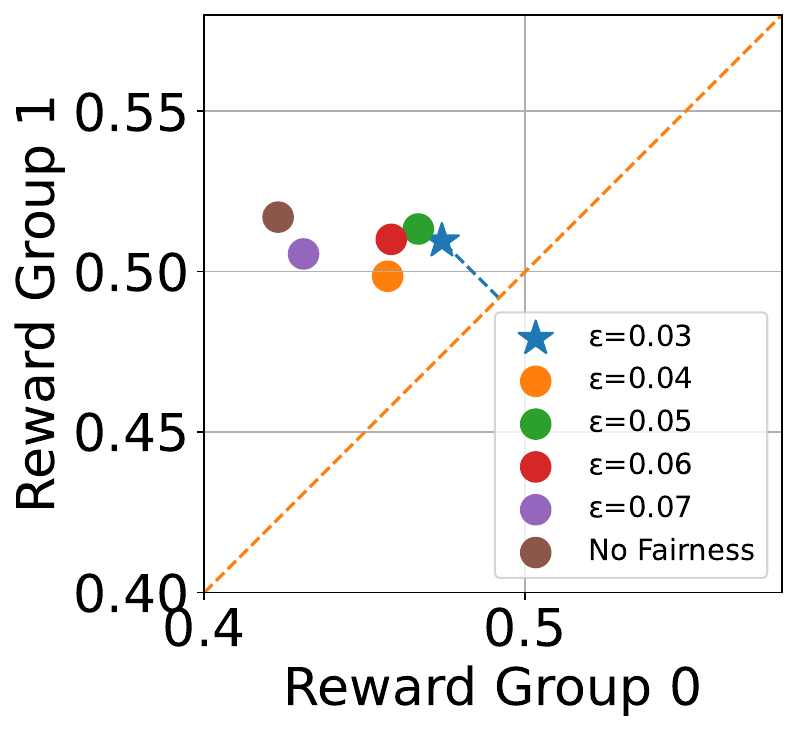} &
    \includegraphics[width=.28\linewidth]{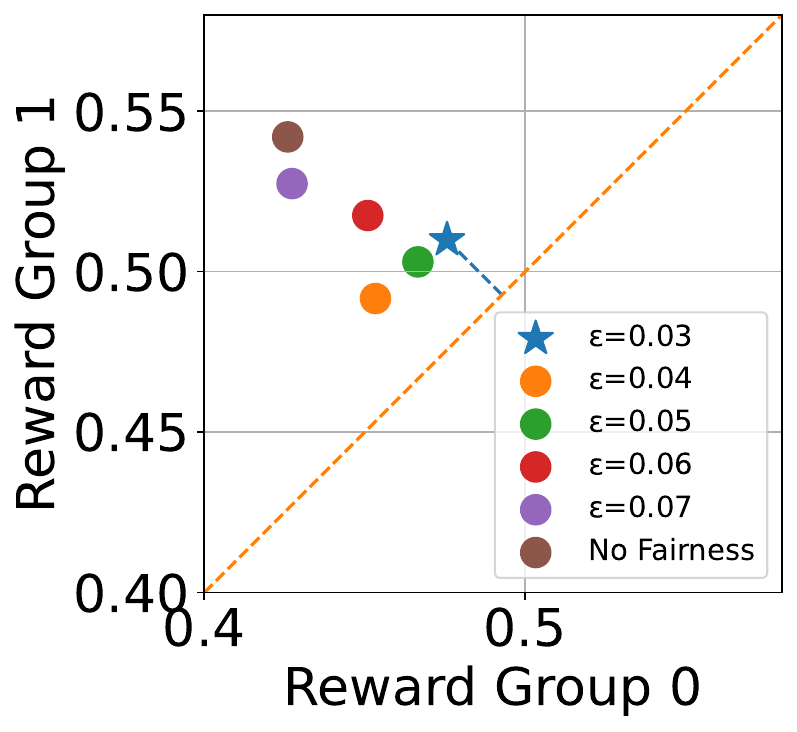} \\
    \small Random Policy & \small Tweak-1 Policy & \small Mixed Policy\\
    &\textbf{Gender} & \\
  \end{tabular}
  \caption{Scatter plot of reward $(R_1, R_2)$ on two groups using three different logging policies with different $\epsilon$ on Drug dataset. First row: education as the sensitive feature; second row: gender as the sensitive feature. Our method can reduce the reward disparity, but also has a trade-off between fairness and per-group reward. We mark the best policy with ``star" (Not Pareto dominated by others and most fair). }
  \label{app_fig:trade-off app}
\end{figure*}

\textbf{Pareto Optimality}
In this section, we discuss the Pareto optimality of our algorithm. We first define the $\varepsilon$-Fair Pareto Optimality and Global Pareto Optimality. And then we discuss how our algorithm can be used with minimal sacrifice of performance while improving the group fairness through a slight modification of the algorithm.   

\begin{definition}[$\varepsilon$-Fair Pareto Optimality]\label{def:fair-pareto}
Let two groups’ expected rewards under policy $\pi$ be $(R_1^{\pi}, R_2^{\pi})$.
Define the $\varepsilon$-fair policy set
\[
  \Pi_{\mathrm{fair}} \;=\; \bigl\{\pi \;:\; \lvert R_1^{\pi}-R_2^{\pi}\rvert \le \varepsilon \bigr\}.
\]
A policy $\pi \in \Pi_{\mathrm{fair}}$ is \emph{Pareto optimal (within $\Pi_{\mathrm{fair}}$)} if there is no
$\pi' \in \Pi_{\mathrm{fair}}$ such that
\[
  R_1^{\pi'} \ge R_1^{\pi} \quad \text{and} \quad R_2^{\pi'} \ge R_2^{\pi},
\]
with at least one inequality strict.
\end{definition}

\begin{definition}[Global Pareto Optimality]\label{def:pareto}
Let two groups’ expected rewards under policy $\pi$ be $(R_1^{\pi}, R_2^{\pi})$. A policy is \emph{Pareto optimal} if there is no
$\pi' \in \Pi$, where $\Pi$ is policy set, such that
\[
  R_1^{\pi'} \ge R_1^{\pi} \quad \text{and} \quad R_2^{\pi'} \ge R_2^{\pi},
\]
with at least one inequality strict.
\end{definition}

\textbf{Remark} The \Cref{def:fair-pareto} and \Cref{def:pareto} define the Pareto optimality with or without the group-wise fairness. We show that the global optimality of our algorithm is $\epsilon$-Fair Pareto Optimal. However, our algorithm might not be globally Pareto Optimal in some settings, especially with very small $\epsilon$, as the $\epsilon$ fairness defined by the user might be achieved by sacrificing one group's reward. Alternatively, one can still seek to achieve the Global Pareto Optimality with a slight modification of the algorithm. From the high-level perspective, we can obtain a Global Pareto Optimal set $\Pi_{\text{GPO}}$, where each policy in the $\Pi_{\text{GPO}}$ is Global Pareto Optimal. Then, we select the policy in $\Pi_{\text{GPO}}$ that is most fair, i.e., with the smallest reward disparity. Practically, we can achieve this by running the algorithm with different $\epsilon$ and obtaining multiple policies. Then we select the global Pareto optimal one with the smallest reward disparity. We summarize the algorithm in \Cref{algo: tradeoff}.

\begin{algorithm}
\caption{Balancing the per-group reward and fairness.}
\label{algo: tradeoff}
\begin{algorithmic}[1]
    \State \textbf{Input:} Offline dataset: $S = \{x, a, r, \pi_\beta(a|x)\}$, threshold $\epsilon$, sensitive feature. 
    \State \textbf{Initialization:} A set of $\mathcal{E}$.  
    \State Learn the reward estimator $\hat{r}$ from $S$.
    \For{$\epsilon \text{ in } \mathcal{E}$}
    \State $\theta_{\epsilon} \leftarrow $ Call Algorithm 1
    \State Evaluate per-group reward $(V_{X^0}(\pi_\epsilon),V_{X^1}(\pi_\epsilon)) $
    \EndFor
    \State \textbf{Return}  The Pareto optimal policy with the smallest reward disparity  $|V_{X^0}(\pi_\epsilon)  -  V_{X^1}(\pi_\epsilon)|$
\end{algorithmic}
\end{algorithm}

We present additional experimental results showing the trade-off between performance and the fairness constraint under different values of $\epsilon$, using \Cref{algo: tradeoff} in \Cref{app_fig:trade-off app}. As shown, there exists a trade-off between the per-group rewards and fairness: improving the reward for one group typically leads to a decrease in another. Among all policies, we mark the one lying on the Pareto frontier with the smallest reward disparity as the best policy.

\section{Experiment details}

\textbf{Reward estimation}. We use XGBoost for the reward estimation $\hat{r} = f(x,a)$. We implement it with scikit-learn, and set the max depth to be 5, L2 regularization $\gamma = 5$, subsample $=0.8$, n\_estimators $= 100$, and $\lambda = 0.1$. Others remain the default in the package.

\textbf{Policy network}. The hidden layer of the policy network is $(256, 256)$ and we use RELU as the activation function. 

\textbf{Other hyperparameters}. The upper bound of the dual variable is set to be $B=0.5$, the learning rate of the dual variable is tuned among $\{0.05,0.1,0.5,1\}$, the learning rate of the policy network is tuned among $\{5e-4,1e-3,5e-3,1e-2\}$.

Our codes are available at 
\url{https://github.com/guoyihonggyh/group-sensitive-fair-bandit.git}

\newpage
\bibliography{ref}
\bibliographystyle{ims}
\end{document}